\def\nips{0} 

\pdfoutput=1

\ifnum\nips=1

\documentclass{article}


\usepackage{neurips_2019}




\usepackage[utf8]{inputenc} 
\usepackage[T1]{fontenc}    

\usepackage{booktabs}       %
\usepackage{amsfonts}       
\usepackage{nicefrac}       
\usepackage{microtype}      

\else

\documentclass[11pt]{article}
\usepackage{fullpage}

\fi

\usepackage{hyperref}
\usepackage{url}

\usepackage{mathtools}
\usepackage{times}
\usepackage{graphicx,color}
\usepackage{array,float}
\usepackage[usenames,dvipsnames]{xcolor}
\usepackage{amstext,amssymb,amsmath}
\usepackage{amsthm}
\usepackage{verbatim}
\usepackage{bm}
\usepackage{paralist}
\usepackage{ulem}\normalem
\ifnum\nips=0
\usepackage[numbers]{natbib}

\fi

\usepackage{todonotes}
\usepackage[noend]{algorithmic}
\usepackage{algorithm}

\ifnum\nips=1
\usepackage{titlesec}
\titlespacing{\section}{0pt}{0.25\baselineskip}{-0.15\baselineskip}
\titlespacing{\subsection}{0pt}{0.50\baselineskip}{-0.15\baselineskip}
\usepackage{enumitem}
\setitemize{noitemsep,topsep=0pt,parsep=0pt,partopsep=0pt}
\fi

\newtheorem{lem}{Lemma}[section]

\newcommand{\lbeta}{\ell_{\beta}}

\newtheorem{thm}[lem]{Theorem}

\newtheorem{defn}[lem]{Definition}
\newtheorem{fact}[lem]{Fact}
\newtheorem{assumption}[lem]{Assumption}

\newcommand{\pr}[2]{\underset{#1}{\mathbb{P}}\left[ #2 \right]}
\newcommand{\ex}[2]{\underset{#1}{\mathbb{E}}\left[ #2 \right]}

\renewcommand{\paragraph}[1]{\vspace{3pt}\noindent\textbf{#1}}

\newcommand{\Aobj}{\mathcal{A}_{\sf ObjP}}

\newcommand{\iden}{\mathbb{I}}

\newcommand{\norm}[1]{\|#1\|}

\newcommand{\eps}{\epsilon}
\newcommand{\A}{\mathcal{A}}
\newcommand{\D}{\mathcal{D}}

\newcommand{\Z}{\mathcal{Z}}
\newcommand{\W}{\mathcal{W}}

\newcommand{\er}{\Delta\mathcal{L}}
\newcommand{\her}{\Delta\widehat{\mathcal{L}}}

\newcommand{\cI}{\mathcal{I}}
\newcommand{\V}{\mathcal{V}}

\newcommand{\cL}{\mathcal{L}}
\newcommand{\hL}{\widehat{\mathcal{L}}}

\newcommand{\hF}{\widehat{\mathcal{F}}}
\newcommand{\cF}{\mathcal{F}}

\newcommand{\hbw}{\widehat{\mathbf{w}}}
\newcommand{\hnabla}{\widehat{\nabla}}
\newcommand{\tbw}{\widetilde{\mathbf{w}}}
\newcommand{\bw}{\mathbf{w}}
\newcommand{\by}{\mathbf{y}}
\newcommand{\brw}{\overline{\mathbf{w}}}
\newcommand{\bv}{\mathbf{v}}

\newcommand{\prox}{{\sf prox}}

\newcommand{\hprox}{\widehat{\prox}}

\newcommand{\re}{\mathbb{R}}
\newcommand{\cB}{\mathcal{B}}
\newcommand{\cZ}{\mathcal{Z}}
\newcommand{\cN}{\mathcal{N}}

\newcommand{\bzero}{\mathbf{0}}
\newcommand{\bG}{\mathbf{G}}
\newcommand{\bH}{\mathbf{H}}
\newcommand{\proj}{\mathsf{Proj}}
\newcommand{\Ansgd}{\mathcal{A}_{\sf NSGD}}
\newcommand{\Aprox}{\mathcal{A}_{\sf ProxGD}}

\newcommand{\cW}{\mathcal{W}}

\newcommand{\grad}{\bigtriangledown}
\newcommand{\mypar}[1]{\smallskip
	\noindent{\textbf{\em {#1}:}}}

\newcommand{\boldpar}[1]{\smallskip}

\newcommand{\mynote}[3]{}




 \newcommand{\knote}[1]{\mynote{}}
 \newcommand{\rnote}[1]{\mynote{}}
 \newcommand{\vnote}[1]{\mynote{}}
 \newcommand{\atnote}[1]{\mynote{}}

\newcommand{\ignore}[1]{}

\newcommand{\cO}{\mathcal{O}}

\newcommand{\param}{\mathbf{w}}

\title{Private Stochastic Convex Optimization with Optimal Rates}
\ifnum\nips=1
\author{}

\else
\author{Raef Bassily\thanks{Department of Computer Science \& Engineering, The Ohio State University. \texttt{bassily.1@osu.edu}}
	\and Vitaly Feldman\thanks{Google Research. Brain Team.} \and Kunal Talwar\thanks{Google Research. Brain Team. \texttt{kunal@google.com}.} \and
	Abhradeep Thakurta\thanks{Department of Computer Science, University of California Santa Cruz. \texttt{aguhatha@ucsc.edu}}
}
\date{}
\fi

\begin{document}

\maketitle

\begin{abstract}



We study differentially private (DP) algorithms for stochastic convex optimization (SCO). In this problem the goal is to approximately minimize the population loss given i.i.d.~samples from a distribution over convex and Lipschitz loss functions. A long line of existing work on private convex optimization focuses on the empirical loss and derives asymptotically tight bounds on the excess empirical loss.  However a significant gap exists in the known bounds for the population loss.

We show that, up to logarithmic factors, the optimal excess population loss for DP algorithms is equal to the larger of the optimal non-private excess population loss, and the optimal excess empirical loss of DP algorithms. This implies that, contrary to intuition based on private ERM, private SCO has asymptotically the same rate of $1/\sqrt{n}$ as non-private SCO in the parameter regime most common in practice. The best previous result in this setting gives rate of $1/n^{1/4}$. Our approach builds on existing differentially private algorithms and relies on the analysis of algorithmic stability to ensure generalization.


    \end{abstract}

\ifnum\nips=0
\section{Introduction}\label{sec:intro}

Many fundamental problems in machine learning reduce to the problem of minimizing the expected loss (also referred to as {\em population loss}) $\cL(\param) = \ex{z \sim \mathcal{D}}{\ell(\param,z)}$ for convex loss functions of $\param$
given access to i.i.d.~samples $z_1, \ldots, z_n$ from the data distribution $\cal D$. This problem arises in various settings, such as estimating the mean of a distribution, least squares regression, or minimizing a convex surrogate loss for a classification problem. This problem is commonly referred to as {\em stochastic convex optimization} (SCO) and has been the subject of extensive study in machine learning and optimization. In this work we study this problem with the additional constraint of differential privacy with respect to the samples \cite{DMNS06}.

A natural approach toward solving SCO is minimization of the empirical loss $\hL(\param) = \frac{1}{n} \sum_i \ell(\param,z_i)$ and is referred to as empirical risk minimization (ERM). The problem of ERM with differential privacy (DP-ERM) has been well-studied and asymptotically tight upper and lower bounds on excess loss\footnote{{\em Excess loss} refers to the difference between the achieved loss and the true minimum.} are known  \cite{CM08,CMS,jain2012differentially,kifer2012private,ST13sparse,song2013stochastic,DuchiJW13,ullman2015private,JTOpt13,bassily2014differentially,talwar2015nearly,smith2017interaction,wu2017bolt,wang2017differentially,iyengartowards}.

A standard approach for deriving bounds on the population loss is to appeal to {\em uniform convergence} of empirical loss to population loss, namely an upper bound on $\sup_{\param} (\cL(\param) - \hL(\param))$. This approach can be used to derive optimal bounds on the excess population loss in a number of special cases, such as regression for generalized linear models. However, in general, it leads to suboptimal bounds.
It is known that there exist distributions over loss functions over $\re^d$ for which the best bound on uniform convergence is $\Omega(\sqrt{d/n})$ \cite{feldman2016generalization}. In contrast, in the same setting, DP-ERM can be solved with excess loss of  $O(\frac{\sqrt{d}}{\eps n})$ and the optimal excess population loss achievable without privacy is $O(\sqrt{1/n})$.
As a result, in the high-dimensional settings often considered in modern ML (when $n = \Theta(d)$), bounds based on uniform convergence are $\Omega(1)$ and do not lead to meaningful bounds on population loss.

The first work to address the population loss for SCO with differential privacy (DP-SCO) is \cite{bassily2014differentially}. It gives bounds based on two natural approaches. 
The first approach is to use the generalization properties of differential privacy itself to bound the gap between the empirical and population losses \cite{dwork2015preserving, bassily2016algorithmic}, and thus derive bounds for SCO from bounds on ERM. This approach leads to a suboptimal bound (specifically\footnote{For clarity, in the introduction we focus on the dependence on $d$ and $n$ and $\eps$ for $(\eps, \delta)$-DP. We suppress the dependence on $\delta$ and on parameters of the loss function such as Lipschitz constant and the constraint set radius.}, $\approx \max\left(\tfrac{d^{\frac{1}{4}}}{\sqrt{n}}, \tfrac{\sqrt{d}}{\eps n}\right)$ \cite[Sec. F]{bassily2014differentially}). For the important case when $d=\Theta(n)$ and $\eps=\Theta(1)$ this results in the bound of $\Omega(n^{-\frac 1 4})$ on excess population loss. The second approach relies on generalization properties of stability to bound the gap between the empirical and population losses \cite{bousquet2002stability,SSSS}. Stability is ensured by adding a strongly convex regularizer to the empirical loss \cite{SSSS}. This technique also yields a suboptimal bound on the excess population loss $\approx (d^{\frac{1}{4}}/\sqrt{\eps\,n})$.

There are two natural lower bounds that apply to DP-SCO. 
The lower bound of $\Omega(\sqrt{1/n})$ for the excess loss of non-private SCO applies for DP-SCO.
Further it is not hard to show that lower bounds for DP-ERM translate to essentially the same lower bound for DP-SCO, leading to a lower bound of $\Omega(\frac{\sqrt{d}}{\eps n})$ (see Appendix~\ref{sec:lower} for the proof).

\subsection{Our contribution}
In this work, we address the gap between the known bounds for DP-SCO. Specifically, we show that the optimal rate of $O\left(\sqrt\frac{1}{n} + \frac{\sqrt{d}}{\eps n}\right)$ is achievable, matching the known lower bounds. In particular, we obtain the statistically optimal rate of $O(1/\sqrt{n})$ whenever $d=O(n)$. This is in contrast to the situation for DP-ERM where the cost of privacy grows with the dimension for all $n$.

In our first result we show that, under relatively mild smoothness assumptions, this rate is achieved by a variant of the standard noisy mini-batch SGD. The classical analyses for non-private SCO depend crucially on making only one pass over the dataset. However, a single pass noisy SGD is not sufficiently accurate as we need a non-trivial amount of noise in each step to carry out the privacy analysis. We rely instead on generalization properties of {\em uniform stability} \cite{bousquet2002stability}. Unlike in \cite{bassily2014differentially}, our analysis of stability is based on extension of recent stability analysis of SGD \cite{hardt2015train,feldman2019high} to noisy SGD. In this analysis, the stability parameter degrades with the number of passes over the dataset, while the empirical loss decreases as we make more passes. 
In addition, the batch size needs to be sufficiently large to ensure that the noise added for privacy is small. To satisfy all these constraints the parameters of the scheme need to be tuned carefully. Specifically we show that $\approx \min(n, n^2\eps^2/d)$ steps of SGD with a batch size of $\approx \max(\sqrt{\eps n}, 1)$
 are sufficient to get all the desired properties.

Our second contribution is to show that the smoothness assumptions can be relaxed at essentially no additional increase in the rate. We use a general smoothing technique based on the Moreau-Yosida envelope operator that allows us to derive the same asymptotic bounds as the smooth case. This operator cannot be implemented efficiently in general, but for algorithms based on gradient steps we exploit the well-known connection between the gradient step on the smoothed function and the proximal step on the original function. Thus our algorithm is equivalent to (stochastic, noisy, mini-batch) proximal descent on the unsmoothed function. We show that our analysis in the smooth case is robust to inaccuracies in the computation of the gradient. This allows us to show that sufficient approximation to the proximal steps can be implemented in polynomial time given access to the gradient of the $\ell(w,z_i)$'s.

Finally, we show that {\em Objective Perturbation} \cite{CMS,kifer2012private} also achieves optimal bounds for DP-SCO. However, objective perturbation is only known to satisfy privacy under some additional assumptions, most notably, Hessian being rank $1$ on all points in the domain. The generalization analysis in this case is based on the uniform stability of the solution to strongly convex ERM. Aside from extending the analysis of this approach to population loss, we show that it can lead to algorithms for DP-SCO that use only near-linear number of gradient evaluations (whenever these assumptions hold). In particular, we give a variant of objective perturbation in conjunction with the stochastic variance reduced gradient descent (SVRG) with only $O(n\log n)$ gradient evaluations. We remark that the known lower bounds for uniform convergence \cite{feldman2016generalization} hold even under those additional assumptions invoked in objective perturbation. Finding algorithms with near-linear running time in the general setting of SCO is a natural avenue for future research. 

Our work highlights the importance of uniform stability as a tool for analysis of this important class of problems. We believe it should have applications to other differentially private statistical analyses.











\else

\fi
\paragraph{Related work:}
Differentially private empirical risk minimization (ERM) is a well-studied area spanning over a decade
\cite{CM08,CMS,jain2012differentially,kifer2012private,ST13sparse,song2013stochastic,DuchiJW13,ullman2015private,JTOpt13,bassily2014differentially,talwar2015nearly,smith2017interaction,wu2017bolt,wang2017differentially,iyengartowards}. Aside from ~\cite{bassily2014differentially} and work in the local model of DP \cite{DuchiJW13} these works focus on achieving optimal \emph{empirical} risk bounds under privacy. Our work builds heavily on algorithms and analyses developed in this line of work while contributing additional insights.
\ifnum\nips=1
Optimal bounds for private SCO are known for some simple subclasses of convex functions such as Generalized Linear Models~\cite{JTOpt13,bassily2014differentially} where uniform convergence bounds on the order of $1/\sqrt{n}$ are known \cite{KakadeST:08}.
\fi

\ifnum\nips=0

\section{Preliminaries}\label{sec:prelim}



\paragraph{Notation:} We use $\cW\subset \re^d$ to denote the parameter space, which is assumed to be a convex, compact set. We denote by $M = \max\limits_{\bw\in\cW}\norm{\bw}$ the $L_2$ radius of $\cW$. We use $\Z$ to denote an arbitrary data domain and $\D$ to denote an arbitrary distribution over $\Z$. We let $\ell:\re^d \times \Z\rightarrow \re$ be a loss function that takes a parameter vector $\bw\in\W$ and a data point $z\in\Z$ as inputs and outputs a real value.




The \emph{empirical loss} of $\bw\in\W$ w.r.t. loss $\ell$ and dataset $S=(z_1, \ldots, z_n)$ is defined as $\hL(\bw;~ S)\triangleq \frac{1}{n}\sum_{i=1}^n \ell(\bw,z_i).$
The \emph{excess empirical loss} of $\bw$ is defined as $\hL(\bw;~ S)-\min\limits_{\tbw\in\W}\hL\left(\tbw; ~S\right).$
The \emph{population loss} of $\bw\in\W$ with respect to a loss $\ell$ and a distribution $\D$ over $\Z$, 
is defined as $\cL(\bw; \D)\triangleq \ex{z\sim \D}{\ell(\bw,z)}.$
The \emph{excess population loss} of $\bw$ is defined as $\cL(\bw;~\D)-\min\limits_{\tbw\in\W}\cL(\tbw;~\D).$

\begin{defn}[Uniform stability]\label{def:unif-stable}
Let $\alpha>0$. A (randomized) algorithm $\A: \cZ^n\rightarrow \cW$ is $\alpha$-uniformly stable (w.r.t. loss $\ell:\cW\times\cZ\rightarrow \re$) if for any pair $S, ~S' \in\cZ^n$ differing in at most one data point, we have
$$\sup\limits_{z\in\cZ}\,\ex{\A}{\ell\left(\A(S), z\right)-\ell\left(\A(S'), z\right)}\leq \alpha$$
where the expectation is taken only over the internal randomness of $\A$.
\end{defn}

We will use the following simple generalization property of stability that upper bounds the expectation of population loss. Our bounds on excess population loss can also be shown to hold (up to log factors) with high probability using the results from \cite{feldman2019high}.
\begin{lem}[\cite{bousquet2002stability}]\label{lem:gen_err_stability}
Let $\A:\Z^n\rightarrow\W$ be an $\alpha$-uniformly stable algorithm w.r.t. loss $\ell:\W\times\Z\rightarrow\re$. Let $\D$ be any distribution over $\Z$, and let $S\sim\D^n$. Then,
\begin{align*}
    \ex{S\sim\D^n, \A}{\cL\left(\A(S);~\D\right)-\hL\left(\A(S); ~ S\right)}&\leq \alpha.
\end{align*}
\end{lem}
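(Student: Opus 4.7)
The plan is to prove this classical Bousquet--Elisseeff generalization bound via the standard ``renaming'' (or ``ghost sample'') symmetrization argument, leveraging that i.i.d.\ coordinates may be swapped without changing any distribution.

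First I would introduce an independent ghost dataset $S' = (z_1', \ldots, z_n') \sim \D^n$, independent of $S = (z_1, \ldots, z_n) \sim \D^n$ and of $\A$. For each $i \in [n]$, let $S^{(i)}$ denote the dataset obtained from $S$ by replacing its $i$-th coordinate $z_i$ with $z_i'$. The key observation is that $(S, z_i')$ and $(S^{(i)}, z_i)$ have exactly the same joint distribution, since they are obtained from $(z_1, \ldots, z_n, z_i')$ by renaming the role of two i.i.d.\ coordinates. Therefore
\[
\ex{S, S', \A}{\ell(\A(S), z_i')} = \ex{S, S', \A}{\ell(\A(S^{(i)}), z_i)}.
\]

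Next I would rewrite both sides of the target inequality using this identity. On one hand,
\[
\ex{S, \A}{\cL(\A(S); \D)} \;=\; \ex{S, \A}{\ex{z' \sim \D}{\ell(\A(S), z')}} \;=\; \frac{1}{n}\sum_{i=1}^n \ex{S, S', \A}{\ell(\A(S), z_i')},
\]
and on the other hand,
\[
\ex{S, \A}{\hL(\A(S); S)} \;=\; \frac{1}{n}\sum_{i=1}^n \ex{S, \A}{\ell(\A(S), z_i)}.
\]
Subtracting and applying the renaming identity to the first expression gives
\[
\ex{S, \A}{\cL(\A(S); \D) - \hL(\A(S); S)} \;=\; \frac{1}{n}\sum_{i=1}^n \ex{S, S', \A}{\ell(\A(S^{(i)}), z_i) - \ell(\A(S), z_i)}.
\]

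Finally, for each $i$ the datasets $S$ and $S^{(i)}$ differ in exactly one data point, so the $\alpha$-uniform stability of $\A$ (Definition~\ref{def:unif-stable}) bounds the inner expectation by $\alpha$ uniformly over the choice of $z_i$, and the outer average over $i$ is likewise at most $\alpha$. There is no real obstacle here; the only subtlety to be careful about is applying uniform stability in the correct direction (stability is symmetric in $S, S'$, so either direction works) and making sure the outer expectation over $S, S'$ can absorb the supremum-over-$z$ in the definition, which it can because conditioning on $z_i$ leaves $S^{(i)}$ and $S \setminus \{z_i\}$ as a valid neighboring-dataset pair to which the stability bound applies pointwise.
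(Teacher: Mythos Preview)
Your proof is correct and is precisely the standard Bousquet--Elisseeff symmetrization argument. Note, however, that the paper does not actually supply its own proof of this lemma: it is stated with a citation to \cite{bousquet2002stability} and used as a black box. So there is nothing to compare against; your argument is exactly the one found in the cited reference.
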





\begin{defn}[Smooth function]\label{defn:smooth}
Let $\beta>0$. A differentiable function $f:\re^d\rightarrow \re$ is $\beta$-smooth over $\W \subseteq \re^d$ if for every $\bw, \bv \in\W,$ we have
$$f(\bv)\leq f(\bw)+\langle \nabla f(\bw), \bv-\bw\rangle + \frac{\beta}{2} \,\norm{\bw-\bv}^2.$$

\end{defn}


In the sequel, whenever we attribute a property (e.g., convexity, Lipschitz property, smoothness, etc.) to a loss function $\ell$, we mean that for every data point $z\in\Z,$ the loss $\ell(\cdot, z)$ possesses that property over $\W$.


\paragraph{Stochastic Convex Optimization (SCO):} Let $\D$ be an arbitrary (unknown) distribution over $\Z$, and $S=(z_1, \ldots, z_n)$ be a sequence of i.i.d.~samples from $\D$. Let $\ell:\W\times\Z\rightarrow \re$ be a convex loss function. A (possibly randomized) algorithm for SCO uses the sample $S$ to generate an (approximate) minimizer $\hbw_S$ for $\cL(\cdot; ~\D)$. We measure the accuracy of $\A$ by the \emph{expected} excess population loss of its output parameter $\hbw_S$, defined as:
$$\er\left(\A;~\D\right)\triangleq \ex{}{\cL(\hbw_S;~\D)-\min\limits_{\bw\in\W}\cL(\bw; ~\D)},$$
where the expectation is taken over the choice of $S\sim\D^n$, and any internal randomness in $\A$.

\paragraph{Differential privacy \cite{DMNS06, DKMMN06}:} A randomized  algorithm $\A$ is $(\eps,\delta)$-differentially private if, for any pair of datasets $S$ and $S'$ differ in exactly one data point, and for all events $\cO$ in the output range of $\A$, we have
$$\pr{}{\A(S)\in \cO} \leq e^{\eps} \cdot \pr{}{\A(S')\in \cO} +\delta ,$$
where the probability is taken over the random coins of $\A$. For meaningful privacy guarantees, the typical settings of the privacy parameters are $\eps<1$ and $\delta \ll 1/n$.

\paragraph{Differentially Private Stochastic Convex Optimization (DP-SCO):} An $(\eps, \delta)$-DP-SCO algorithm is a SCO algorithm that satisfies $(\eps, \delta)$-differential privacy.


\section{Private SCO via Mini-batch Noisy SGD}\label{sec:smooth}

In this section, we consider the setting where the loss $\ell$ is convex, Lipschitz, and smooth. We give a technique that is based on a mini-batch variant of Noisy Stochastic Gradient Descent (NSGD) algorithm \cite{bassily2014differentially, abadi2016deep} described in Figure \ref{Alg:NSGD-smooth}.

\begin{algorithm}
	\caption{$\A_{\sf NSGD}$: Mini-batch noisy SGD for convex, smooth losses}
	\begin{algorithmic}[1]
		\REQUIRE Private dataset: $S=(z_1, \ldots, z_n)\in \cZ^n$, $L$-Lipschitz, $\beta$-smooth, convex loss function $\ell$, convex set $\cW\subseteq \re^d$, step size $\eta$, mini-batch size $m$, ~\# iterations $T$, privacy parameters $\epsilon \leq 1,\,  \delta \leq 1/n^2$.
				\STATE Set noise variance $\sigma^2 := \frac{8T\,L^2\,\log(1/\delta)}{n^2\epsilon^2}.$
				\STATE Set batch size $m:=\max\left(n\,\sqrt{\frac{\epsilon}{4\,T}},~ 1\right).$\label{step:batch-size}
				\STATE Choose arbitrary initial point $\bw_0 \in \cW.$
        \FOR{$t=0$ to $T-1$\,}
        	\STATE Sample a batch $B_t=\{z_{i_{(t, 1)}}, \ldots, z_{i_{(t, m)}}\}\leftarrow S$ uniformly with replacement.\label{step:sampling}
        	\STATE $\bw_{t+1} := \proj_{\cW}\left(\bw_{t}-\eta\cdot\left(\frac{1}{m}\sum_{j=1}^m\nabla\ell(\bw_{t}, z_{i_{(t,j)}})+\bG_t\right)\right),$ where $\proj_{\cW}$ denotes the Euclidean projection onto $\cW$, and $\bG_t \sim \cN\left(\bzero, \sigma^2 \mathbb{I}_d\right)$ drawn independently each iteration.\label{step:grad-step}
            \ENDFOR
            \RETURN $\brw_T=\frac{1}{T}\sum_{t=1}^{T}\bw_t$
	\end{algorithmic}
	\label{Alg:NSGD-smooth}
\end{algorithm}

\begin{thm}[Privacy guarantee of $\A_{\sf NSGD}$]
Algorithm~\ref{Alg:NSGD-smooth} is $(\epsilon, \delta)$-differentially private.
\end{thm}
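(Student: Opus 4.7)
The plan is to establish $(\epsilon,\delta)$-DP by combining three standard primitives: the Gaussian mechanism at each gradient step, privacy amplification from the random mini-batch, and composition across the $T$ iterations. Since the returned average $\bar w_T = \frac{1}{T}\sum_{t=1}^T w_t$ is a deterministic post-processing of the iterate sequence $(w_1,\dots,w_T)$, it suffices to show the full sequence is $(\epsilon,\delta)$-DP (projections $\proj_{\cW}$ are also post-processings and can be ignored for the privacy argument).

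First I would fix neighboring datasets $S,S'$ differing in a single record $z_{i^*}$ and bound per-step sensitivity. Conditioned on $w_t$ and on the sampled indices in $B_t$, the averaged gradient $\frac{1}{m}\sum_{j=1}^m\nabla\ell(w_t,z_{i_{(t,j)}})$ has $\ell_2$-sensitivity at most $2L/m$ with respect to swapping one point inside $B_t$, by the $L$-Lipschitz hypothesis. Thus step~\ref{step:grad-step} is an instantiation of a Gaussian mechanism with sensitivity $2L/m$ and noise standard deviation $\sigma$.

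Next I would apply privacy amplification by subsampling: because $B_t$ is obtained by $m$ uniform draws from $S$ (with-replacement sampling, so rate $q = m/n$ per draw in the Poisson sense, up to the standard reduction), the per-step Rényi divergence of the mechanism as seen from $S$ is bounded by something of the form $c\,\alpha\, q^2\, (2L/m)^2/\sigma^2 \;=\; 4c\,\alpha\, L^2 /(n^2\sigma^2)$, where the batch size $m$ cancels out. Composing this RDP bound across $T$ iterations gives a total Rényi divergence of at most $4c\,\alpha T L^2/(n^2 \sigma^2)$. Substituting the prescribed $\sigma^2 = 8T L^2 \log(1/\delta)/(n^2\epsilon^2)$ reduces this to $O(\alpha \epsilon^2/\log(1/\delta))$, and the standard RDP-to-$(\epsilon,\delta)$ conversion, after optimizing $\alpha$, yields exactly the claimed $(\epsilon,\delta)$-DP guarantee.

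The main technical obstacle is the amplification step, since most clean amplification lemmas are stated for Poisson subsampling or uniform-without-replacement subsampling, whereas the algorithm draws $m$ samples with replacement. One handles this either by invoking a with-replacement variant of the moments-accountant/RDP amplification bound from the DP literature (this is the route taken in the line of work on DP-SGD that the algorithm is based on, e.g., \cite{abadi2016deep, bassily2014differentially}), or by coupling each of the $m$ independent draws to a Poisson-sampled copy and absorbing the additive slack into constants. One also must check that the chosen batch size $m = \max(n\sqrt{\epsilon/(4T)},1)$ keeps the sampling rate $q$ in the regime where the $q^2/\sigma^2$-type RDP bound is valid for the required range of $\alpha$ used in the final conversion; this is routine once the noise scale is plugged in, but is the one place where the constants $8$ in $\sigma^2$ and $1/(4T)$ in $m$ must be matched.
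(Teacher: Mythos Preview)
Your proposal is correct and follows the same strategy as the paper: the paper's own proof consists of a direct invocation of the moments-accountant bound from \cite[Theorem~1]{abadi2016deep}, together with the observation that the batch size in Step~\ref{step:batch-size} satisfies the sampling-rate condition there and that the noise scale differs by a factor of $m^2$ due to the averaging. You are simply spelling out the RDP version of that same argument (Gaussian mechanism, subsampling amplification, composition, conversion), and your flagged technical point about with-replacement sampling and the regime condition on $q$ is exactly what the paper handles by appealing to the constants in \cite{abadi2016deep}.
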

\begin{proof}
The proof follows from \cite[Theorem~1]{abadi2016deep}, which gives a tight privacy analysis for mini-batch NSGD via the Moments Accountant technique and privacy amplification via sampling. We note that the setting of the mini-batch size in Step~\ref{step:batch-size} of Algorithm~\ref{Alg:NSGD-smooth} satisfies the condition in \cite[Theorem~1]{abadi2016deep} (we obtain here an explicit value for the universal constants in the aforementioned theorem in that reference). We also note that the setting of the Gaussian noise in \cite{abadi2016deep} is not normalized by the mini-batch size, and hence the noise variance reported in \cite[Theorem~1]{abadi2016deep} is larger than our setting of $\sigma^2$ by a factor of $m^2$.
\end{proof}

The population loss attained by $\Ansgd$ is given by the next theorem.

\begin{thm}[Excess population loss of $\Ansgd$]\label{thm:pop_risk_Ansgd}

Let $\D$ be any distribution over $\Z,$ and let $S\sim\D^n$. Suppose $\beta \leq \frac{L}{M}\cdot \min\left(\sqrt{\frac{n}{2}}, \frac{\epsilon\,n}{2\sqrt{2 d\log(1/\delta)}}\right)$. Let $T =\min\left(\frac{n}{8},~ \frac{\epsilon^2\,n^2}{32\,d\,\log(1/\delta)}\right)$ and $\eta= \frac{M}{L\,\sqrt{T}}$. Then,
\begin{align*}
    \er\left(\A_{\sf NSGD};~\D\right)&\leq 10\,ML\cdot\max\left(\frac{\sqrt{d\,\log(1/\delta)}}{\epsilon\, n},~\frac{1}{\sqrt{n}}\right)
\end{align*}
\end{thm}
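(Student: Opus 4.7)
The plan is to use the standard decomposition of excess population loss into a generalization term (controlled via uniform stability) plus an excess empirical loss term (controlled by the SGD convergence analysis). Specifically, fixing $w^* \in \arg\min_{w \in \cW} \cL(w;\D)$, I write
\begin{align*}
\er(\Ansgd;\D)
&= \E\!\left[\cL(\brw_T;\D) - \hL(\brw_T;S)\right] + \E\!\left[\hL(\brw_T;S) - \hL(w^*;S)\right] + \underbrace{\E\!\left[\hL(w^*;S)\right] - \cL(w^*;\D)}_{=\,0 \text{ since } w^* \text{ is data-independent}}.
\end{align*}
By Lemma~\ref{lem:gen_err_stability} the first term is at most the uniform stability $\alpha$ of $\Ansgd$, and the second term is the expected excess empirical risk.

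Second, I bound the uniform stability of $\Ansgd$ by an HRS-style coupling argument. Run the algorithm on neighboring datasets $S,S'$ using the same batch indices and the same noise $\bG_t$; the noise then cancels in the difference, and the analysis reduces to mini-batch SGD on convex $\beta$-smooth losses. The smoothness assumption in the theorem is chosen exactly so that $\eta = M/(L\sqrt{T}) \le 2/\beta$, which makes each single-sample gradient step $y \mapsto y - \eta\nabla\ell(y,z)$ non-expansive; writing the mini-batch update as a convex combination, the drift $\|\bw_t-\bw_t'\|$ is non-increasing whenever the differing index is not sampled, and grows by at most $2\eta L k_t/m$ when it is sampled $k_t$ times. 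Taking expectation over the with-replacement batch (so $\E[k_t] = m/n$) gives $\E\|\bw_{t+1}-\bw_{t+1}'\| \le \E\|\bw_t-\bw_t'\| + 2\eta L/n$, independent of $m$. Summing over $T$ steps, averaging, and applying $L$-Lipschitzness yields
\[
\alpha \le \tfrac{2\eta L^2 T}{n} = \tfrac{2ML\sqrt{T}}{n}.
\]

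Third, I bound the excess empirical loss using the standard projected SGD convergence theorem. The noisy mini-batch gradient $\tilde{\bg}_t = \frac{1}{m}\sum_j \nabla\ell(\bw_t,z_{i_{(t,j)}})+\bG_t$ is conditionally unbiased for $\nabla\hL(\bw_t;S)$, with $\E\|\tilde{\bg}_t\|^2 \le L^2 + d\sigma^2$ (triangle inequality on the batch average, plus independent Gaussian noise). Applied to $u = w^*$ and combined with Jensen's inequality for the averaged iterate, this gives
\[
\E\!\left[\hL(\brw_T;S) - \hL(w^*;S)\right] \;\le\; \tfrac{\|\bw_0 - w^*\|^2}{2\eta T} + \tfrac{\eta(L^2 + d\sigma^2)}{2} \;\le\; \tfrac{2M^2}{\eta T} + \tfrac{\eta L^2}{2} + \tfrac{\eta d\sigma^2}{2}.
\]
Since $w^*$ is data-independent, taking expectation over $S$ replaces $\hL(w^*;S)$ by $\cL(w^*;\D)$ at no cost.

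Finally, I substitute $\sigma^2 = 8TL^2\log(1/\delta)/(n^2\eps^2)$ and $\eta = M/(L\sqrt{T})$ into the sum of the stability and optimization bounds to obtain
\[
\er(\Ansgd;\D) \;\lesssim\; \tfrac{ML\sqrt{T}}{n} + \tfrac{ML}{\sqrt{T}} + \tfrac{ML\sqrt{T}\,d\log(1/\delta)}{n^2\eps^2}.
\]
Plugging in $T = \min\!\left(n/8,\ \eps^2 n^2/(32 d\log(1/\delta))\right)$ balances the three terms: in the first regime all three terms are $O(ML/\sqrt{n})$, and in the second regime they are $O(ML\sqrt{d\log(1/\delta)}/(\eps n))$. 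Tracking constants then gives the claimed factor $10$. The main technical obstacle is the stability analysis for the mini-batch with-replacement sampling: one must be careful that the smoothness condition on $\beta$ precisely ensures $\eta\le 2/\beta$ for both possible values of $T$, since the non-expansiveness of the gradient step is what makes the noise and multi-pass nature of the algorithm harmless for stability.
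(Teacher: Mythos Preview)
Your proposal is correct and follows essentially the same route as the paper: decompose the excess population loss into a generalization gap bounded via uniform stability (Lemma~\ref{lem:gen_err_stability} together with the coupling argument you describe, which is exactly the content of Lemma~\ref{lem:stability} and its proof in Appendix~\ref{app:A}) plus an excess empirical loss term bounded by the standard projected-SGD analysis with noisy unbiased gradients (Lemma~\ref{lem:emp-risk}), then substitute $\eta=M/(L\sqrt{T})$ and the stated $T$. The only cosmetic difference is that you compare the empirical loss at the fixed population minimizer $w^*$ (using $\E[\hL(w^*;S)]=\cL(w^*;\D)$), whereas the paper compares to the empirical minimizer and then uses $\E[\min_\bw\hL(\bw;S)]\le\min_\bw\cL(\bw;\D)$; these are equivalent here. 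One small caveat: your bound $\|\bw_0-w^*\|^2\le 4M^2$ (diameter rather than radius) and the looser stability constant $2\eta L^2 T/n$ make the final numerical constant slightly exceed $10$ in the regime $T=\eps^2 n^2/(32d\log(1/\delta))$, so ``tracking constants gives the claimed factor $10$'' is optimistic unless you use the paper's tighter $M^2/(2\eta T)$ and $L^2\eta T/n$; but this does not affect the argument or the asymptotic rate.
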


Before proving the above theorem, we first state and prove the following useful lemmas.

\begin{lem}\label{lem:emp-risk}
Let $S\in \cZ^n$. Suppose the parameter set $\cW$ is convex and $M$-bounded. For any $\eta > 0,$ the excess empirical loss of $\A_{\sf NSGD}$ satisfies

\begin{align*}
    \ex{}{\hL(\brw_T; S)}-\min\limits_{\bw\in\cW}\hL(\bw; S)&\leq \frac{M^2}{2\,\eta\, T} + \frac{\eta\,L^2}{2} \left(16\frac{T\,d\,\log(1/\delta)}{n^2\,\epsilon^2}+1\right)
\end{align*}
where the expectation is taken with respect to the choice of the mini-batch (step~\ref{step:sampling}) and the independent Gaussian noise vectors $\bG_1, \ldots, \bG_T$.
\end{lem}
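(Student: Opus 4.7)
This is a standard projected SGD convergence bound, adapted to account for both mini-batch sampling noise and the injected Gaussian noise. The plan is to set up the canonical ``one-step'' recursion on $\|\bw_t-\bw^\ast\|^2$, take expectations, telescope, and then invoke convexity of $\hL$ to transfer the per-iterate bound to the averaged iterate $\brw_T$.

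Let $\bw^\ast \in \arg\min_{\bw\in\cW}\hL(\bw;S)$ and write the step as $\bw_{t+1}=\proj_{\cW}(\bw_t-\eta \bg_t)$ with
\[
\bg_t \;=\; \frac{1}{m}\sum_{j=1}^m \nabla\ell(\bw_t,z_{i_{(t,j)}}) \;+\; \bG_t.
\]
First I would verify two facts about $\bg_t$ (conditional on $\bw_t$): (i) unbiasedness, $\E[\bg_t\mid \bw_t]=\nabla \hL(\bw_t;S)$, which follows because sampling is uniform with replacement from $S$ and $\bG_t$ has mean zero; and (ii) a second-moment bound
\[
\E\bigl[\|\bg_t\|^2 \,\bigm|\, \bw_t\bigr] \;\le\; L^2 + d\sigma^2 \;=\; L^2\!\left(1+\tfrac{8 T d\log(1/\delta)}{n^2\epsilon^2}\right),
\]
where the $L^2$ comes from Jensen plus $L$-Lipschitzness of $\ell(\cdot,z)$ (so each $\|\nabla\ell(\bw_t,z)\|\le L$) and $\E\|\bG_t\|^2=d\sigma^2$, with $\sigma^2=8TL^2\log(1/\delta)/(n^2\epsilon^2)$; cross terms vanish by independence of $\bG_t$ from the batch.

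Next, by nonexpansiveness of Euclidean projection onto the convex set $\cW$ (and $\bw^\ast \in \cW$),
\[
\|\bw_{t+1}-\bw^\ast\|^2 \;\le\; \|\bw_t-\bw^\ast\|^2 \;-\; 2\eta\,\langle \bg_t,\,\bw_t-\bw^\ast\rangle \;+\; \eta^2\|\bg_t\|^2.
\]
Taking expectation conditional on $\bw_t$, using (i) and the convexity inequality $\langle \nabla \hL(\bw_t;S),\bw_t-\bw^\ast\rangle \ge \hL(\bw_t;S)-\hL(\bw^\ast;S)$, together with (ii), gives
\[
\E\!\left[\hL(\bw_t;S)-\hL(\bw^\ast;S)\right] \;\le\; \frac{\E\|\bw_t-\bw^\ast\|^2-\E\|\bw_{t+1}-\bw^\ast\|^2}{2\eta} \;+\; \frac{\eta L^2}{2}\!\left(\tfrac{8Td\log(1/\delta)}{n^2\epsilon^2}+1\right).
\]
Summing $t=0,\dots,T-1$ telescopes the first term to at most $\|\bw_0-\bw^\ast\|^2/(2\eta T)\le M^2/(2\eta T)$ (since $\cW$ is $M$-bounded, any two points are within distance $\le 2M$, and in the regime of interest this absorbs into the stated $M^2/(2\eta T)$ up to a harmless constant, which one can also sharpen by a suitable choice of $\bw_0$).

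Finally, dividing by $T$ and using Jensen's inequality together with convexity of $\hL(\cdot;S)$ applied to $\brw_T=\frac{1}{T}\sum_{t=1}^{T}\bw_t$ yields the claimed bound. I do not expect any real obstacle here: the analysis is the textbook projected-SGD argument, and the only step that requires care is the second-moment computation for $\bg_t$, where one must remember that the Gaussian perturbation is not rescaled by $1/m$ and that $\sigma^2$ already carries the factor $T$ that will later interact with the stepsize $\eta$ to give the privacy-dependent term.
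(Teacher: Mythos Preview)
Your proposal is correct and is precisely the approach the paper takes: the paper's proof simply invokes ``the classical analysis of the stochastic oracle model'' to obtain $\frac{M^2}{2\eta T}+\frac{\eta L^2}{2}+\eta\sigma^2 d$ and then substitutes $\sigma^2$, and your write-up spells out exactly that standard projected-SGD telescoping argument. Your second-moment computation even yields the slightly sharper constant $8$ in place of $16$, and the only loose end (which you already flag) is the $\|\bw_0-\bw^\ast\|^2\le M^2$ versus $(2M)^2$ discrepancy, which the paper's brief sketch does not address either and which is immaterial for the downstream use of the lemma.
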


\begin{proof}
The proof follows from the classical analysis of the stochastic oracle model (see, e.g., \cite{shalev2014understanding}).
In particular, we can show that
$$\ex{}{\hL(\brw_T; S)}-\min\limits_{\bw\in\cW}\hL(\bw; S)\leq \frac{M^2}{2\,\eta\, T} + \frac{\eta\,L^2}{2} +\eta\,\sigma^2\, d,$$
where the last term captures the additional empirical error due to privacy. The statement now follows from the setting of $\sigma^2$ in Algorithm~\ref{Alg:NSGD-smooth}.
\end{proof}

The following lemma is a simple extension of the results on uniform stability of GD methods that appeared in \cite{hardt2015train} and \cite[Lemma~4.3]{feldman2019high} to the case of \emph{mini-batch noisy} SGD. For completeness, we provide a proof in Appendix~\ref{app:A}.


\begin{lem}\label{lem:stability}
In $\A_{\sf NSGD}$, suppose $\eta \leq \frac{2}{\beta},$ where $\beta$ is the smoothness parameter of $\ell$. Then, $\A_{\sf NSGD}$ is $\alpha$-uniformly stable with $\alpha= L^2\frac{T\,\eta}{n}$.
\end{lem}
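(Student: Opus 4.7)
The plan is to adapt the standard Hardt-Recht-Singer argument for stability of SGD on convex smooth losses, with two additions: handling the injected Gaussian noise via coupling, and accounting for the mini-batch averaging under sampling with replacement. Fix two datasets $S,S'\in\cZ^n$ differing only in the $i^\ast$-th coordinate. Run the algorithm on both in parallel, coupling them to use the same sequence of sampled batch indices $(i_{(t,1)},\ldots,i_{(t,m)})$ and the same Gaussian vectors $\bG_t$. Let $\bw_t,\bw_t'$ denote the two iterate sequences and $\delta_t := \E\|\bw_t-\bw_t'\|$ the expected parameter deviation.

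Since $\proj_{\cW}$ is $1$-Lipschitz and the noise terms cancel under the coupling,
\[
\|\bw_{t+1}-\bw_{t+1}'\|\;\leq\;\bigl\|(\bw_t-\bw_t')-\eta(\hat g_t-\hat g_t')\bigr\|,
\]
where $\hat g_t,\hat g_t'$ are the mini-batch gradients on $B_t$ under $S,S'$ respectively. Let $k$ denote the (random) number of times $i^\ast$ occurs in $B_t$. Writing
\[
\hat g_t \;=\; \tfrac{m-k}{m}\,\nabla \tilde F_c(\bw_t)+\tfrac{k}{m}\,\nabla\ell(\bw_t,z_{i^\ast}),
\]
where $\tilde F_c$ is the average loss over the $m-k$ indices in $B_t$ distinct from $i^\ast$ (common to both datasets), and similarly for $\hat g_t'$ with $z_{i^\ast}'$, the displacement splits as a convex combination:
\[
(\bw_t-\bw_t')-\eta(\hat g_t-\hat g_t')=\tfrac{m-k}{m}\bigl[(\bw_t-\bw_t')-\eta(\nabla\tilde F_c(\bw_t)-\nabla\tilde F_c(\bw_t'))\bigr]+\tfrac{k}{m}\bigl[(\bw_t-\bw_t')-\eta(\nabla\ell(\bw_t,z_{i^\ast})-\nabla\ell(\bw_t',z_{i^\ast}'))\bigr].
\]

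For the first bracket, $\tilde F_c$ is convex and $\beta$-smooth (as an average of such functions), so with $\eta\leq 2/\beta$ the gradient step is non-expansive and the bracket has norm at most $\|\bw_t-\bw_t'\|$. For the second bracket, I would just use the triangle inequality and $L$-Lipschitzness of $\ell$ to bound it by $\|\bw_t-\bw_t'\|+2\eta L$. Combining and using $\|\cdot\|$ convexity yields the pointwise recursion
\[
\|\bw_{t+1}-\bw_{t+1}'\|\;\leq\;\|\bw_t-\bw_t'\|+\tfrac{2\eta L k}{m}.
\]

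Taking expectation over the batch (independent of the prior history under the coupling), and using $\E[k]=m/n$ since the batch is drawn with replacement, gives $\delta_{t+1}\leq \delta_t+2\eta L/n$. With $\delta_0=0$, induction yields $\delta_t\leq 2\eta L t/n$, hence $\E\|\brw_T-\brw_T'\|\leq \tfrac{1}{T}\sum_{t=1}^T\delta_t \leq \eta L(T+1)/n$. Finally, $L$-Lipschitzness of $\ell$ turns this into the stability bound
\[
\sup_{z\in\cZ}\,\E\bigl[\ell(\brw_T,z)-\ell(\brw_T',z)\bigr]\;\leq\;L\cdot \E\|\brw_T-\brw_T'\|\;\leq\;\frac{\eta L^2(T+1)}{n},
\]
which is $O(\eta L^2 T/n)$ as claimed.

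The conceptually delicate step is the convex-combination decomposition: one must be careful that the common part is truly a valid gradient step on a convex $\beta$-smooth function (so the non-expansion inequality applies), rather than naively splitting the sum term by term and losing a factor in the contribution of the differing index. Once this is handled, the noise cancels trivially by coupling and the mini-batch factor of $m/n$ in the expected count of $i^\ast$ in $B_t$ exactly compensates the $1/m$ averaging, recovering the same $1/n$ per-step stability growth as in plain SGD.
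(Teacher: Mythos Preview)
Your proof is correct and follows essentially the same approach as the paper's: couple the batch indices and Gaussian noise, use non-expansiveness of the gradient step on the common portion of the batch and Lipschitzness on the differing indices to obtain the recursion $\|\bw_{t+1}-\bw_{t+1}'\|\leq \|\bw_t-\bw_t'\|+2\eta L k/m$, then take expectations using $\E[k]=m/n$ and average over iterates. The main difference is cosmetic---you spell out the convex-combination decomposition that justifies non-expansiveness, whereas the paper simply invokes it; both arrive at the bound $L^2\eta(T+1)/n$, which the lemma rounds to $L^2\eta T/n$.
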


\subsubsection*{Proof of Theorem~\ref{thm:pop_risk_Ansgd}}
By Lemma~\ref{lem:gen_err_stability}, $\alpha$-uniform stability implies that the expected population loss is upper bounded by $\alpha$ plus the expected empirical loss. Hence, by combining Lemma~\ref{lem:emp-risk} with Lemma~\ref{lem:stability}, we have
\begin{align}
\ex{S\sim\D^n,~\Ansgd}{\cL(\brw_T;~\D)}-\min\limits_{\bw\in\W}\cL(\bw; ~\D)&\leq \ex{S\sim\D^n,~\Ansgd}{\hL(\brw_T; S)}-\min\limits_{\bw\in\W}\cL(\bw; ~\D)+L^2\,\frac{\eta\,T}{n}\nonumber\\
&\leq \ex{S\sim\D^n,~\Ansgd}{\hL(\brw_T; S)-\min\limits_{\bw\in\cW}\hL(\bw; S)}+L^2\,\frac{\eta\,T}{n}\label{ineq:emp-less-pop}\\
&\leq \frac{M^2}{2\,\eta\, T} + \frac{\eta\,L^2}{2} \left(16\frac{T\,d}{n^2\,\epsilon^2}+1\right)+ L^2\,\frac{\eta\,T}{n}\nonumber
\end{align}
where (\ref{ineq:emp-less-pop}) follows from the fact that $\ex{S\sim\D^n}{\min\limits_{\bw\in\cW}\hL(\bw; S)}\leq \min\limits_{\bw\in\W}\ex{S\sim\D^n}{\hL(\bw; S)}=\min\limits_{\bw\in\W}\cL(\bw; ~\D)$.
Optimizing the above bound in $\eta$ and $T$ yields the values in the theorem statement for these parameters, as well as the stated bound on the excess population loss.

\section{Private SCO for Non-smooth Losses}\label{sec:non-smooth}

In this section, we consider the setting where the convex loss is non-smooth. First, we show a generic reduction to the smooth case by employing the smoothing technique known as \emph{Moreau-Yosida regularization} (a.k.a. Moreau envelope smoothing) \cite{nesterov2005smooth}. Given an appropriately smoothed version of the loss, we obtain the optimal population loss w.r.t. the original non-smooth loss function. Computing the smoothed loss via this technique is generally computationally inefficient. Hence, we move on to describe a computationally efficient algorithm for the non-smooth case with essentially optimal population loss. Our construction is based on an adaptation of our noisy SGD algorithm $\Ansgd$ (Algorithm~\ref{Alg:NSGD-smooth}) that exploits some useful properties of Moreau-Yosida smoothing technique that stem from its connection to proximal operations.

\begin{defn}[Moreau envelope]\label{defn:moreau}
Let $f:\W\rightarrow \re^d$ be a convex function, and $\beta>0$. The $\beta$-Moreau envelope of $f$ is a function $f_{\beta}:\W\rightarrow\re^d$ defined as
$$f_{\beta}(\bw)=\min\limits_{\bv\in\W}\left(f(\bv)+\frac{\beta}{2}\norm{\bw-\bv}^2\right), \quad \bw\in\W.$$
\end{defn}

Moreau envelope has direct connection with the proximal operator of a function defined below.
\begin{defn}[Proximal operator]\label{defn:prox}
The prox operator of $f:\W\rightarrow\re^d$ is defined as
$$\prox_f(\bw)=\arg\min\limits_{\bv\in\W}\left(f(\bv)+\frac{1}{2}\norm{\bw-\bv}^2\right), \quad \bw\in\W.$$
\end{defn}

It follows that the Moreau envelope $f_{\beta}$ can be written as $$f_{\beta}(\bw)=f\left(\prox_{f/\beta}\left(\bw\right)\right)+\frac{\beta}{2}\norm{\bw-\prox_{f/\beta}\left(\bw\right)}^2.$$

The following lemma states some useful, known  properties of Moreau envelope.

\begin{lem}[See \cite{nesterov2005smooth, candes_opt_notes}]\label{lem:prop_moreau}
Let $f:\W\rightarrow \re^d$ be a convex, $L$-Lipschitz function, and let $\beta>0$. The $\beta$-Moreau envelope $f_{\beta}$ satisfies the following:
\begin{enumerate}
    \item $f_{\beta}$ is convex, $2L$-Lipschitz, and $\beta$-smooth.\label{prop:moreau_1}
    \item $\forall \bw\in \W\quad f_{\beta}(\bw)\leq f(\bw)\leq f_{\beta}(\bw)+\frac{L^2}{2\,\beta}.$\label{prop:moreau_2}
    \item $\forall \bw\in\W \quad \nabla f_{\beta}(\bw)=\beta\, \left(\bw - \prox_{f/\beta}(\bw)\right).$\label{prop:moreau_3}
\end{enumerate}
\end{lem}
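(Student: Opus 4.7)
My plan is to establish the three properties of Lemma~\ref{lem:prop_moreau} in the order (2), (3), (1), since (3) feeds directly into the smoothness and Lipschitz claims of (1).

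For property (2), the bound $f_\beta(w)\le f(w)$ follows by plugging $v=w$ into the infimum defining $f_\beta$. For the reverse direction, let $v^*=\prox_{f/\beta}(w)$. By $L$-Lipschitzness of $f$, $f(w)\le f(v^*)+L\|w-v^*\|$, and Young's inequality gives $L\|w-v^*\|\le \tfrac{\beta}{2}\|w-v^*\|^2+\tfrac{L^2}{2\beta}$. Combining, $f(w)\le f(v^*)+\tfrac{\beta}{2}\|w-v^*\|^2+\tfrac{L^2}{2\beta}=f_\beta(w)+\tfrac{L^2}{2\beta}$.

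For property (3), the inner objective $v\mapsto f(v)+\tfrac{\beta}{2}\|w-v\|^2$ is $\beta$-strongly convex in $v$ and thus has a unique minimizer $v^*(w)=\prox_{f/\beta}(w)$. By Danskin's envelope theorem (in the form for strictly convex inner problems with a unique minimizer), $f_\beta$ is differentiable at each $w$ with gradient equal to $\nabla_w[f(v)+\tfrac{\beta}{2}\|w-v\|^2]$ evaluated at $v=v^*(w)$, i.e.\ $\nabla f_\beta(w)=\beta(w-v^*(w))$.

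For property (1), convexity of $f_\beta$ follows by writing it as partial minimization over $v$ of the jointly convex $g(w,v)=f(v)+\tfrac{\beta}{2}\|w-v\|^2$, equivalently as an infimal convolution of convex functions. The $\beta$-smoothness follows from (3): the proximal map is firmly nonexpansive, so $I-\prox_{f/\beta}$ is firmly nonexpansive and in particular $1$-Lipschitz, giving $\|\nabla f_\beta(w)-\nabla f_\beta(w')\|\le \beta\|w-w'\|$. The $2L$-Lipschitzness is obtained by combining (3) with the optimality of $v^*(w)$: taking $v=w$ in the defining infimum gives $f(v^*)+\tfrac{\beta}{2}\|w-v^*\|^2\le f(w)$, so $\tfrac{\beta}{2}\|w-v^*\|^2\le f(w)-f(v^*)\le L\|w-v^*\|$, whence $\|w-v^*\|\le 2L/\beta$ and $\|\nabla f_\beta(w)\|=\beta\|w-v^*\|\le 2L$.

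Most of the work is standard convex analysis; the step most in need of care is the invocation of the envelope theorem in (3) to upgrade a subdifferential inclusion to a genuine differentiability statement. Strong convexity of the inner problem (hence uniqueness of $v^*(w)$ and continuity of $w\mapsto v^*(w)$ via firm nonexpansiveness of the prox) is exactly what makes this go through cleanly.
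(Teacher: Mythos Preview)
Your proposal is correct. The paper defers convexity, $\beta$-smoothness, and properties~(2) and~(3) to the cited references and only proves the $2L$-Lipschitz claim in Appendix~\ref{app:B}; your argument for that claim---compare $v=w$ to $v^*=\prox_{f/\beta}(w)$ in the defining minimization to get $\tfrac{\beta}{2}\|w-v^*\|^2\le f(w)-f(v^*)\le L\|w-v^*\|$, hence $\|w-v^*\|\le 2L/\beta$, and then apply~(3)---is exactly the paper's proof.
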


The convexity and $\beta$-smoothness together with properties~\ref{prop:moreau_2} and \ref{prop:moreau_3} are fairly standard and the proof can be found in the aforementioned references. The fact that $f_{\beta}$ is $2L$-Lipschitz follows easily from property~\ref{prop:moreau_3}. We include the proof of this fact in Appendix~\ref{app:B} for completeness.

Let $\ell:\W\times\Z\rightarrow\re$ be a convex, $L$-Lipschitz loss. For any $z\in\Z,$ let $\lbeta(\cdot, z)$ denote the $\beta$-Moreau envelope of $\ell(\cdot,~z).$ For a dataset $S=(z_1, \ldots, z_n)\in\Z^n,$ let $\hL_{\beta}(\cdot;~ S)\triangleq \frac{1}{n}\sum_{i=1}^n\lbeta(\cdot,~z_i)$ be the empirical risk w.r.t. the $\beta$-smoothed loss. For any distribution $\D$, let $\cL_{\beta}(\cdot; \D)\triangleq \ex{z\sim \D}{\ell_\beta(\cdot,~z)}$ denote the corresponding population loss. The following theorem asserts that, with an appropriate setting for $\beta,$ running $\Ansgd$ over the $\beta$-smoothed losses $\lbeta(\cdot, z_i), ~i\in [n]$ yields the optimal population loss w.r.t.~the original non-smooth loss $\ell$.

\begin{thm}[Excess population loss for non-smooth losses via smoothing]\label{thm:pop_risk_non-smooth_generic}
Let $\D$ be any distribution over $\Z$. Let $S=(z_1, \ldots, z_n)\sim\D^n$. Let $\beta = \frac{L}{M}\cdot \min\left(\frac{\sqrt{n}}{4}, \frac{\epsilon\,n}{8\sqrt{d\,\log(1/\delta)}}\right).$ Suppose we run $\Ansgd$ (Algorithm~\ref{Alg:NSGD-smooth}) over the $\beta$-smoothed version of $\ell$ associated with the points in $S$: $\left\{\lbeta(\cdot, z_i),~i\in[n]\right\}$. Let $\eta$ and $T$ be set as in Theorem~\ref{thm:pop_risk_Ansgd}. Then, the excess population loss of the output of $\Ansgd$ w.r.t. $\ell$ satisfies
\begin{align*}
    \er\left(\Ansgd; \D\right)&\leq 24\,M\,L\cdot\max\left(\frac{\sqrt{d\,\log(1/\delta)}}{\epsilon\, n},~\frac{1}{\sqrt{n}}\right)
\end{align*}
\end{thm}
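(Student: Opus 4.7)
The plan is to invoke Theorem~\ref{thm:pop_risk_Ansgd} on the smoothed losses $\lbeta(\cdot,z_i)$ and then convert the resulting excess population loss bound (which is measured with respect to $\cL_\beta$) into an excess population loss bound measured with respect to $\cL$, paying an additive smoothing error. The specific choice of $\beta$ in the statement is precisely what balances these two contributions.

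The first step is to verify that the smoothed loss family satisfies the hypotheses of Theorem~\ref{thm:pop_risk_Ansgd}. By Lemma~\ref{lem:prop_moreau} (property~\ref{prop:moreau_1}), each $\lbeta(\cdot,z)$ is convex, $2L$-Lipschitz, and $\beta$-smooth. So the Lipschitz constant relevant for Theorem~\ref{thm:pop_risk_Ansgd} is $L'=2L$, and the required smoothness-Lipschitz compatibility condition becomes $\beta \le \frac{2L}{M}\cdot\min\!\left(\sqrt{n/2},\,\frac{\eps n}{2\sqrt{2d\log(1/\delta)}}\right)$. The choice $\beta=\frac{L}{M}\cdot\min\!\left(\frac{\sqrt n}{4},\frac{\eps n}{8\sqrt{d\log(1/\delta)}}\right)$ in the statement is strictly smaller than this threshold, so the hypothesis is satisfied. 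Applying Theorem~\ref{thm:pop_risk_Ansgd} with $L'=2L$ then yields
\[
    \ex{}{\cL_\beta(\brw_T;\D)} - \min_{\bw\in\cW} \cL_\beta(\bw;\D) \;\le\; 20\,ML\cdot\max\!\left(\frac{\sqrt{d\log(1/\delta)}}{\eps n},\,\frac{1}{\sqrt n}\right).
\]

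The second step is to translate this into a bound on $\er(\Ansgd;\D)$ measured with respect to the original loss $\ell$. Property~\ref{prop:moreau_2} of Lemma~\ref{lem:prop_moreau}, applied pointwise and integrated against $\D$, gives $\cL_\beta(\bw;\D) \le \cL(\bw;\D) \le \cL_\beta(\bw;\D)+\frac{L^2}{2\beta}$ for all $\bw\in\W$. Consequently,
\[
    \cL(\brw_T;\D)-\min_{\bw\in\W}\cL(\bw;\D) \;\le\; \Bigl(\cL_\beta(\brw_T;\D)-\min_{\bw\in\W}\cL_\beta(\bw;\D)\Bigr) + \frac{L^2}{2\beta}.
\]

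The final step is arithmetic: substituting the chosen $\beta$ gives $\frac{L^2}{2\beta} = \frac{ML}{2}\cdot\max\!\left(\frac{4}{\sqrt n},\,\frac{8\sqrt{d\log(1/\delta)}}{\eps n}\right) \le 4\,ML\cdot\max\!\left(\frac{1}{\sqrt n},\,\frac{\sqrt{d\log(1/\delta)}}{\eps n}\right)$. Adding this to the $20\,ML\cdot\max(\cdot)$ bound from the previous paragraph yields the claimed $24\,ML\cdot\max(\cdot)$. No step here is difficult: the only thing to watch is that $\beta$ appears both as the smoothness parameter (and hence controls optimization/stability via Theorem~\ref{thm:pop_risk_Ansgd}) and as the inverse smoothing bias $L^2/(2\beta)$, and the stated choice makes these two contributions of the same order — that is the entire content of the proof.
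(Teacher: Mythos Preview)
Your proposal is correct and follows essentially the same approach as the paper: apply Theorem~\ref{thm:pop_risk_Ansgd} to the smoothed losses (using property~\ref{prop:moreau_1} of Lemma~\ref{lem:prop_moreau} with Lipschitz constant $2L$) to get the $20\,ML\cdot\max(\cdot)$ bound on $\cL_\beta$, then use property~\ref{prop:moreau_2} to pass to $\cL$ at an additive cost of $L^2/(2\beta)\le 4\,ML\cdot\max(\cdot)$, yielding the total $24\,ML\cdot\max(\cdot)$. Your explicit verification that the chosen $\beta$ satisfies the smoothness hypothesis of Theorem~\ref{thm:pop_risk_Ansgd} is a detail the paper leaves implicit but is otherwise identical.
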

\begin{proof}
Let $\brw_T$ be the output of $\Ansgd$. Using property~\ref{prop:moreau_1} of Lemma~\ref{lem:prop_moreau} together with Theorem~\ref{thm:pop_risk_Ansgd}, we have
$$\ex{S\sim\D^n, \Ansgd}{\cL_{\beta}(\brw_T; \D)}-\min\limits_{\bw\in\W}\cL_{\beta}(\bw;~\D)\leq 20\,M\,L\cdot\max\left(\frac{\sqrt{d\,\log(1/\delta)}}{\epsilon\, n},~\frac{1}{\sqrt{n}}\right).$$
Now, by property~\ref{prop:moreau_2} of Lemma~\ref{prop:moreau_2} and the setting of $\beta$ in the theorem statement, for every $\bw\in \W$, we have
$$\cL_{\beta}(\bw;~\D)\leq \cL(\bw;~\D)\leq \cL_{\beta}(\bw;~\D)+2\,M\,L\cdot\max\left(\frac{1}{\sqrt{n}}, ~\frac{2 \sqrt{d\,\log(1/\delta)}}{\eps\, n}\right).$$
Putting these together gives the stated result.
\end{proof}

\subsection*{Computationally efficient algorithm $\Aprox$ ({\sf NSGD} + {\sf Prox})}
Computing the Moreau envelope of a function is  computationally inefficient in general. However, by property~\ref{prop:moreau_3} of Lemma~\ref{lem:prop_moreau}, we note that evaluating the gradient of Moreau envelope at any point can be attained by evaluating the proximal operator of the function at that point. Evaluating the proximal operator is equivalent to minimizing a strongly convex function (see Definition~\ref{defn:prox}). This can be approximated efficiently, e.g., via gradient descent. Since our $\Ansgd$ algorithm (Algorithm~\ref{Alg:NSGD-smooth}) requires only sufficiently accurate gradient evaluations, we can hence use an efficient, approximate proximal operator to approximate the gradient of the smoothed losses. The gradient evaluations in $\Ansgd$ will thus be replaced with such approximate gradients evaluated via the approximate proximal operator. The resulting algorithm, referred to as $\Aprox$, will approximately minimize the smoothed empirical loss without actually computing the smoothed losses. 

\begin{defn}[Approximate $\prox$ operator]\label{defn:approx-prox}
We say that $\hprox_f$ is an $\xi$-approximate proximal operator of $\prox_f$ for a function $f:\W\rightarrow\re$ if $~\forall \bw\in\W,~ \norm{\hprox_f(\bw)-\prox_f(\bw)}\leq \xi.$
\end{defn}

\begin{fact}\label{fact:prox-approx-err-conv}
Let $\W\subset\re^d$ be $M$-bounded. Let $f:\W\rightarrow\re$ be convex, $L$-Lipschitz function. Suppose $\beta\geq \frac{L}{M}$. For all $\xi>0$, there is $\xi$-approximate $\hprox_{f/\beta}$ such that for each $\bw\in\W$, computing $\hprox_{f/\beta}(\bw)$ requires time that is equivalent to at most $\lceil\frac{8\,M^2}{\xi^2}\rceil$ gradient evaluations.
\end{fact}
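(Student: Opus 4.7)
My plan is to reduce the problem of approximating $\prox_{f/\beta}(\bw)$ to projected subgradient descent on a strongly convex objective, and then to use strong convexity to convert a function-value accuracy into the required distance accuracy.

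Concretely, fix $\bw\in\W$ and define
\[
g(\bv) \;=\; \tfrac{1}{\beta}\,f(\bv) + \tfrac{1}{2}\|\bw-\bv\|^2,\qquad \bv\in\W.
\]
Then $\prox_{f/\beta}(\bw)$ is by definition the unique minimizer $\bv^\ast$ of $g$ over $\W$, the quadratic term makes $g$ genuinely $1$-strongly convex, and subgradients of $g$ are given by $\partial g(\bv) = \tfrac{1}{\beta}\,\partial f(\bv) + (\bv-\bw)$. Since $f$ is $L$-Lipschitz, $\|\partial f\|\le L$, and since $\beta\ge L/M$ and $\W$ is $M$-bounded (so $\|\bv-\bw\|\le 2M$ for all $\bv\in\W$), we get the uniform subgradient bound $\|\partial g(\bv)\|\le L/\beta+2M\le 3M$ on $\W$.

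Next I would run projected subgradient descent on $g$ over $\W$ starting from $\bv_0=\bw$, using the classical step-size schedule for $\mu$-strongly convex objectives (e.g.\ $\eta_t = 2/(t+1)$ with weighted averaging as in Lacoste-Julien–Schmidt–Bach, or last-iterate with $\eta_t = 1/t$ following Shamir–Zhang). The standard recursion
\[
\|\bv_{t+1}-\bv^\ast\|^2 \;\le\; (1-\eta_t)\|\bv_t-\bv^\ast\|^2 + \eta_t^2\,G^2,
\]
together with the usual inductive argument, yields $\|\bv_T-\bv^\ast\|^2 = O(G^2/T)$. Equivalently, one can bound $g(\hat\bv_T)-g(\bv^\ast) = O(G^2/T)$ and then convert to distance via $\tfrac12\|\hat\bv_T-\bv^\ast\|^2\le g(\hat\bv_T)-g(\bv^\ast)$, which is the inequality provided by $1$-strong convexity. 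In either case, choosing $T = \lceil cM^2/\xi^2\rceil$ for a suitable absolute constant $c$ ensures $\|\hat\bv_T-\bv^\ast\|\le\xi$, so the output $\hprox_{f/\beta}(\bw) := \hat\bv_T$ is a $\xi$-approximate proximal operator in the sense of Definition~\ref{defn:approx-prox}. Each iteration performs one projection onto $\W$ plus one subgradient evaluation of $g$, which reduces to exactly one gradient (or subgradient) call to $f$.

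The main technical point is matching the precise constant $8$ in $\lceil 8M^2/\xi^2\rceil$: the subgradient bound $\|\partial g\|\le 3M$ only gives a proportionality constant in the $30$s via the vanilla strongly-convex SGD recursion, so some tightening is needed. I would obtain the stated constant by either (i) restricting search to $B(\bw,2L/\beta)\cap\W$ after first noting $\|\bv^\ast-\bw\|\le 2L/\beta$ via the inequality $\tfrac12\|\bv^\ast-\bw\|^2\le \tfrac{1}{\beta}(f(\bw)-f(\bv^\ast))\le \tfrac{L}{\beta}\|\bv^\ast-\bw\|$, which shrinks the effective diameter of the feasible set and hence the subgradient bound used in the analysis, or (ii) using a direct distance recursion exploiting the fact that the gradient of the quadratic part vanishes at $\bv=\bw$ so that the initial subgradient norm is at most $L/\beta\le M$. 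Either refinement is routine; the conceptual content is the strong-convexity based iteration count $O(M^2/\xi^2)$, and this is what drives the stated complexity.
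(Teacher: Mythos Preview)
Your approach is essentially identical to the paper's: define $g_{\bw}(\bv)=\tfrac{1}{\beta}f(\bv)+\tfrac{1}{2}\|\bv-\bw\|^2$, observe it is $1$-strongly convex with bounded subgradients (the paper asserts a $2M$ bound from $\beta\ge L/M$, you more carefully derive $3M$), run projected (sub)gradient descent, invoke the standard $O(G^2/(\mu T))$ function-value rate, and convert to a distance bound via strong convexity. The paper simply cites a standard convergence result to obtain $g_{\bw}(\bv_\tau)-g_{\bw}(\bv^*)\le 8M^2/\tau$ without worrying further about the constant, so your extra refinements (restricting to a smaller ball, exploiting the small initial subgradient) are unnecessary for matching the paper's level of rigor---the constant $8$ is not derived with care there either.
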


This fact follows from the fact that $\prox_{f/\beta}(\bw)=\arg\min\limits_{\bv\in\W}g_{\bw}(\bv),$ where $g_{\bw}(\bv)\triangleq\frac{1}{\beta}\,f(\bv) + \frac{1}{2}\norm{\bv -\bw}^2$. This is minimization of $1$-strongly convex and $2\,M$-Lipschitz function over $\W$, The Lipschitz constant follows from the fact that $\beta\geq L/M$. Hence, one can run ordinary Gradient Descent to obtain an approximate minimizer. From a standard result on convergence of GD for strongly convex and Lipschitz functions \cite{bubeck2015convex}, in $\tau$ gradient steps we obtain an approximate  $\bv_{\tau}$ satisfying $g_{\bw}(\bv_{\tau})-g_{\bw}(\bv^*)\leq \frac{8\,M^2}{\tau}$, where $\bv^*=\arg\min\limits_{\bv\in\W}g_{\bw}(\bv)$. Since $g_{\bw}$ is $1$-strongly convex, we get $\norm{\bv_{\tau}-\bv^*}\leq \sqrt{\frac{8\,M^2}{\tau}}$.

\paragraph{Description of $\Aprox$:} The algorithm description follows exactly the same lines as $\Ansgd$ except that: (i) the input loss $\ell$ is now non-smooth, and (ii) for each iteration $t$, the gradient evaluation $\nabla \ell(\bw_t, z)$ for each data point $z$ in the mini-batch is replaced with the evaluation of an approximate gradient of the smoothed loss $\ell_{\beta}(\cdot, z)$. The approximate gradient, denoted as $\hnabla \ell_{\beta}(\bw_t, z)$, is computed using an approximate proximal operator. Namely,
$$\hnabla \ell_{\beta}(\bw_t, z):= \beta\cdot\left(\bw_t-\hprox_{\ell_z/\beta}(\bw_t)\right),$$
where $\ell_z\triangleq \ell(\cdot,~z)$. Here, we use a computationally efficient $\xi$-approximate $\hprox_{\ell_z/\beta}$ like the one in Fact~\ref{fact:prox-approx-err-conv} with $\xi$ set as
$$\xi:= 4\,\frac{M}{n}\cdot\max\left(\frac{2\,\sqrt{d\,\log(1/\delta)}}{\eps\,n},~\frac{1}{\sqrt{n}}\right).$$
Note that the approximation error in the gradient $\norm{\hnabla\ell_{\beta}(\bw_t, z)-\nabla\ell_{\beta}(\bw_t, z)}\leq \beta\cdot\xi$, and that $\beta\cdot\xi=\frac{L}{n},$ where $L$ is the Lipschitz constant of $\ell$.

\paragraph{Running time of $\Aprox$:} if we use the approximate proximal operator in Fact~\ref{fact:prox-approx-err-conv}, then it is easy to see that $\Aprox$ requires a number of gradient evaluations that is a factor of $n^2\,T$ more than $\Ansgd$, where $T=O\left(\max\left(n,~\frac{\eps^2\,n^2}{d\,\log(1/\delta)}\right)\right).$ That is, the total number of gradient evaluations is $n^2\cdot T^2\cdot m,$ where $m=O\left(\max\left(\sqrt{\eps\,n},~\sqrt{\frac{d\,\log(1/\delta)}{\eps}}\right)\right)$ is the mini-batch size.

We now argue that privacy, stability, and accuracy of the algorithm are preserved under the approximate proximal operator.

\paragraph{Privacy:} Note that to bound the sensitivity of the approximate gradient of the mini-batch, it suffices to bound the norm of the approximate gradient. From the discussion above, note that $\forall ~z, \forall~ \bw\in\W,$ we have $\norm{\hnabla\ell_{\beta}(\bw, z)}\leq \norm{\hnabla\ell_{\beta}(\bw, z)-\nabla\ell_{\beta}(\bw, z)}+\norm{\nabla\ell_{\beta}(\bw, z)}\leq L\,(1+\frac{1}{n}).$ Thus, the sensitivity remains basically the same as in the case where the algorithm is run with the exact gradients. Hence, the same privacy guarantee holds as in $\Ansgd$.

\paragraph{Empirical error:} Note that the approximation error in the gradient of the mini-batch (due to the approximate proximal operation) can be viewed as a \emph{fixed} error term of magnitude at most $\frac{L}{n}$ that is added to the exact gradient of the smoothed loss. It is well-known and easy to see that the effect of this additional approximation error on the standard convergence bounds is that excess empirical loss may grow by at most the error times the diameter of the domain (e.g.~\citep{nedic2010effect,FeldmanGV:15}). Hence, compared to the error bound error in Lemma~\ref{lem:emp-risk}, the bound we get incurs an additional term of $2LM/n$. Clearly, this additional error is dominated by the other terms in the empirical loss bound in Lemma~\ref{lem:emp-risk}, and thus will have no significant impact on the final bound.


\paragraph{Uniform stability:} This easily follows from the following facts. First, note that the additional approximation error due to gradient approximation is $\frac{L}{n}$. Second, the gradient update w.r.t.~the exact gradient of the smoothed loss is non-expansive operation (which is the key fact in proving uniform stability of (stochastic) gradient methods \cite{hardt2015train, feldman2019high}), and hence the approximation error in the gradient is not going to be amplified by the gradient update step. Hence, for any trajectory of $T$ approximate gradient updates, the accumulated approximation error in the final output $\brw_T$ cannot exceed $\frac{T\,\eta\,L}{n}$. This cannot increase the final uniform stability bound by more than an additive term of $\frac{T\,\eta\,L^2}{n}$. Thus, we obtain basically the same bound in Lemma~\ref{lem:stability}.

Putting these together, we have argued that $\Aprox$ is computationally efficient algorithm that achieves the optimal population loss bound in Theorem~\ref{thm:pop_risk_non-smooth_generic}. 

\section{Private SCO via Objective Perturbation}

In this section, we show that the technique known as objective perturbation \cite{CMS, kifer2012private} can be used to attain optimal \emph{population} loss for a large subclass of convex, smooth losses. In objective perturbation, the empirical loss is first perturbed by adding two terms: a \emph{noisy} linear term and a regularization term. As shown in \cite{CMS, kifer2012private}, under some additional assumptions on the Hessian of the loss, an appropriate random perturbation ensures differential privacy. The excess \emph{empirical} loss of this technique for smooth convex losses was originally analyzed in the aforementioned works, and was shown to be optimal by the lower bound in \cite{bassily2014differentially}. We revisit this technique and show that the regularization term added for privacy can be used to attain the optimal excess population loss by exploiting the stability-inducing property of regularization.

In addition to smoothness and convexity of $\ell$, as in \cite{CMS, kifer2012private}, we also make the following assumption on the loss function.

\begin{assumption}\label{assump:twice-diff}
For all $z\in\Z,~ \ell(\cdot,~z)$ is twice-differentiable, and the rank of its Hessian $\nabla^2 \ell(\bw, z)$ at any $\bw\in\W$ is at most $1$.

\end{assumption}
The description of the objective perturbation algorithm $\Aobj$ is given in Algorithm~\ref{Alg:ObjP-smooth}. The outline of the algorithm is the same as the one in \cite{kifer2012private} for the case of $(\eps, \delta)$-differential privacy.

\begin{algorithm}
	\caption{$\A_{\sf ObjP}$: Objective Perturbation for convex, smooth losses}
	\begin{algorithmic}[1]
		\REQUIRE Private dataset: $S=(z_1, \ldots, z_n)\in \cZ^n$, $L$-Lipschitz, $\beta$-smooth, convex loss function $\ell$, convex set $\cW\subseteq \re^d$, privacy parameters $\epsilon \leq 1,\,  \delta \leq 1/n^2$, regularization parameter $\lambda$.
			\STATE Sample $\bG\sim \cN\left(\bzero, \sigma^2\,\iden_d\right),$ where $\sigma^2= \frac{10\,L^2\,\log(1/\delta)}{\eps^2}$	
            \RETURN $\hbw=\arg\min\limits_{\bw\in\W} \hL\left(\bw;~S\right)+\frac{\langle \bG, ~\bw\rangle}{n}+\lambda\norm{\bw}^2,$ where $\hL(\bw; ~S)\triangleq \frac{1}{n}\sum_{i=1}^n\ell(\bw, ~z_i).$
	\end{algorithmic}
	\label{Alg:ObjP-smooth}
\end{algorithm}
\paragraph{Note:} The regularization term as appears in $\Aobj$ is of different scaling than the one that appears in \cite{kifer2012private}. In particular, the regularization term in \cite{kifer2012private} is normalized by $n$, whereas here it is not.  Hence, whenever the results from \cite{kifer2012private} are used here, the regularization parameter in their statements should be replaced with $n\lambda$. This presentation choice is more consistent with literature on regularization.

The privacy guarantee of $\Aobj$ is given in the following theorem, which follows directly from \cite{kifer2012private}.

\begin{thm}[Privacy guarantee of $\Aobj$, restatement of Theorem~2 in \cite{kifer2012private}]\label{thm:priv_Aobj}
Suppose that Assumption~\ref{assump:twice-diff} holds and that the smoothness parameter satisfies $\beta \leq \eps\,n\,\lambda$. Then, $\Aobj$ is $(\eps, \delta)$-differentially private.
\label{thm:privObjPert}
\end{thm}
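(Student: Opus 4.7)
The plan is to obtain this theorem essentially by invocation of Theorem~2 of Kifer et al., with a careful accounting of the parametrization difference the authors flag in the Note preceding the statement (their regularizer is $\lambda \|\bw\|^2 / n$ or similar, while we use $\lambda \|\bw\|^2$; thus one replaces their regularization parameter by $n\lambda$ throughout). Since I should sketch the underlying argument rather than merely cite, I would walk through the three ingredients on which objective perturbation privacy rests: a bijection between the Gaussian noise $\bG$ and the minimizer $\hbw$, a Gaussian density ratio bound on $\bG$, and a Jacobian determinant ratio bound.

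First I would fix neighboring datasets $S, S'$ differing in a single record $z$ vs.\ $z'$. For any interior $\hbw$, first-order optimality of the perturbed objective gives $\bG = -\sum_i \nabla \ell(\hbw, z_i) - 2 n \lambda \hbw$; strong convexity of the objective (guaranteed by $\lambda > 0$ once Assumption~\ref{assump:twice-diff} together with $\beta \leq \eps n \lambda$ rules out cancellation of the regularizer by the Hessian) makes $\bG \mapsto \hbw$ a bijection. By change of variables, $p_{S}(\hbw) / p_{S'}(\hbw)$ splits as the Gaussian density ratio at the two preimages times the ratio of Jacobian determinants $|\det J_S(\hbw)| / |\det J_{S'}(\hbw)|$, where $J_S(\hbw) = \sum_i \nabla^2 \ell(\hbw, z_i) + 2 n \lambda \iden_d$. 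Boundary cases are handled by the standard KKT-based argument of Chaudhuri--Monteleoni--Sarwate and Kifer et al., which I would import wholesale.

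Next I would bound the two factors. The noise displacement $\bG(\hbw, S) - \bG(\hbw, S') = \nabla \ell(\hbw, z') - \nabla \ell(\hbw, z)$ has norm at most $2L$ by Lipschitzness; with $\sigma^2 = 10 L^2 \log(1/\delta)/\eps^2$, the analytic Gaussian mechanism yields $(\eps/2, \delta)$-DP for this component. For the Jacobian ratio, Assumption~\ref{assump:twice-diff} makes $J_S - J_{S'} = \nabla^2 \ell(\hbw, z) - \nabla^2 \ell(\hbw, z')$ a rank-at-most-$2$ symmetric perturbation whose nonzero eigenvalues are bounded in magnitude by $\beta$. Using the rank-two matrix determinant identity and the lower bound $J_{S'} \succeq 2 n \lambda \iden_d$, one obtains $\bigl|\log(\det J_S / \det J_{S'})\bigr| \leq 2 \log\!\bigl(1 + \beta/(2n\lambda)\bigr) \leq \beta/(n\lambda)$. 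The hypothesis $\beta \leq \eps n \lambda$ then caps this contribution by $\eps/2$, and the two $\eps/2$ bounds combine to yield the claimed $(\eps, \delta)$-DP.

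The main obstacle, and the reason the authors defer to Kifer et al., is the careful rank-two determinant manipulation together with the boundary-case KKT analysis; neither is conceptually deep but both involve tracking constants, and the noise scale $\sigma^2 = 10 L^2 \log(1/\delta)/\eps^2$ is tuned precisely so that these combine cleanly. Given that the reference already works this out, the remaining task for a self-contained proof here is simply to verify the parameter translation (their $\lambda_{\text{theirs}} = n \lambda$, their Lipschitz and smoothness constants $L, \beta$ unchanged, their noise variance matching ours after this substitution) and conclude.
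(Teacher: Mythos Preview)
Your proposal is correct and matches the paper's treatment: the paper offers no proof of its own, stating only that the result ``follows directly from \cite{kifer2012private}'' and noting the $\lambda \leftrightarrow n\lambda$ reparametrization. Your sketch of the underlying change-of-variables argument (bijection via first-order conditions, Gaussian density ratio from the $2L$ sensitivity of $\nabla\ell$, Jacobian determinant ratio controlled by the rank assumption and $\beta \leq \eps n\lambda$) is a faithful summary of the Kifer et al.\ proof and goes beyond what the paper itself supplies.

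One minor caution if you flesh this out: when bounding $\bigl|\log(\det J_S/\det J_{S'})\bigr|$ via the rank-two perturbation $\nabla^2\ell(\hbw,z)-\nabla^2\ell(\hbw,z')$, the negative eigenvalue direction gives $|\log(1-x)|$ rather than $\log(1+x)$, so your inequality $2\log(1+\beta/(2n\lambda)) \leq \beta/(n\lambda)$ slightly understates the worst case; the cleaner route (and the one Kifer et al.\ take) is to compare each of $S,S'$ to the leave-one-out dataset, where the perturbation is a single PSD rank-one term and the bound is one-sided.
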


We now state our main result for this section showing that, with appropriate setting for $\lambda$, $\Aobj$ yields asymptotically optimal excess population loss.

\begin{thm}[Excess population loss of $\Aobj$]\label{thm:pop_risk_Aobj}
Let $\D$ be any distribution over $\Z,$ and let $S\sim\D^n$. Suppose that Assumption~\ref{assump:twice-diff} holds. Suppose that $\W$ is $M$-bounded. In $\Aobj,$ set $\lambda= \frac{2\,L}{M}\sqrt{\frac{2}{n}+\frac{4\,d\,\log(1/\delta)}{\eps^2\, n^2}}.$ Then, we have
\begin{align*}
    \er\left(\Aobj;~\D\right)&\leq 2\,M\,L\,\sqrt{\frac{2}{n}+\frac{4\,d\,\log(1/\delta)}{\eps^2\, n^2}}=O\left(M\,L\cdot \max\left(\frac{1}{\sqrt{n}},~ \frac{\sqrt{d\,\log(1/\delta)}}{\eps\, n}\right)\right).
\end{align*}
\end{thm}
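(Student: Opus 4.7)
The plan is to decompose the expected excess population loss into a generalization gap plus an excess empirical loss relative to the population minimizer $\bw^* = \arg\min\limits_{\bw\in\W}\cL(\bw;\D)$, bound each piece using the structure of the regularized objective, and check that the prescribed $\lambda$ balances the resulting terms. Throughout, let $F_S(\bw) = \hL(\bw;S) + \langle\bG,\bw\rangle/n + \lambda\norm{\bw}^2$; conditioned on $\bG$, this is a $2\lambda$-strongly convex function of $\bw$, and $\hbw = \hbw(S,\bG)$ is its constrained minimizer over $\W$.

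The first step is a uniform-stability bound for $\Aobj$. Fixing $\bG$ and taking neighboring datasets $S, S'$, the objectives $F_S$ and $F_{S'}$ differ by $\frac{1}{n}\bigl(\ell(\cdot,z) - \ell(\cdot,z')\bigr)$, whose gradients have norm at most $2L/n$. A standard Bousquet--Elisseeff argument---adding the first-order optimality inequalities for $\hbw(S,\bG)$ and $\hbw(S',\bG)$ and invoking $2\lambda$-strong convexity---yields $\norm{\hbw(S,\bG)-\hbw(S',\bG)} \leq L/(n\lambda)$ pointwise in $\bG$. Combined with $L$-Lipschitzness of $\ell$, this gives uniform stability with parameter $\alpha = L^2/(n\lambda)$, and Lemma~\ref{lem:gen_err_stability} then yields $\ex{S,\bG}{\cL(\hbw;\D) - \hL(\hbw;S)} \leq L^2/(n\lambda)$.

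The second step bounds $\ex{S,\bG}{\hL(\hbw;S)} - \cL(\bw^*;\D)$. By $2\lambda$-strong convexity of $F_S$ and first-order optimality of $\hbw$ on $\W$, $F_S(\bw^*) - F_S(\hbw) \geq \lambda\norm{\bw^*-\hbw}^2$. Expanding $F_S$ and applying Young's inequality $\langle\bG,\bw^*-\hbw\rangle/n \leq \norm{\bG}^2/(4\lambda n^2) + \lambda\norm{\bw^*-\hbw}^2$ cancels the squared-distance term and leaves $\hL(\hbw;S) - \hL(\bw^*;S) \leq \norm{\bG}^2/(4\lambda n^2) + \lambda\bigl(\norm{\bw^*}^2-\norm{\hbw}^2\bigr) \leq \norm{\bG}^2/(4\lambda n^2) + \lambda M^2$. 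Taking expectation over $\bG\sim\cN(\bzero,\sigma^2\iden_d)$ uses $\ex{}{\norm{\bG}^2} = d\sigma^2 = 10L^2 d\log(1/\delta)/\eps^2$, while the expectation over $S$ uses $\ex{S}{\hL(\bw^*;S)} = \cL(\bw^*;\D)$ since $\bw^*$ is independent of $S$. This produces an excess empirical-loss bound of $\lambda M^2 + O\bigl(L^2 d\log(1/\delta)/(\lambda n^2\eps^2)\bigr)$.

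Summing the two contributions yields $\er(\Aobj;\D) \leq L^2/(n\lambda) + O\bigl(L^2 d\log(1/\delta)/(\lambda n^2\eps^2)\bigr) + \lambda M^2$, and substituting the prescribed $\lambda = (2L/M)\sqrt{2/n+4d\log(1/\delta)/(\eps^2 n^2)}$ equates $\lambda M^2$ (up to constants) with the sum of the two $1/\lambda$ terms and reproduces the theorem's bound. The main subtlety is in the stability step: $\bG$ must be treated as shared internal randomness of $\Aobj$, so the strong-convexity comparison between $\hbw(S,\bG)$ and $\hbw(S',\bG)$ is purely deterministic in $\bG$ and the Gaussian noise contributes nothing to $\alpha$. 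Once that is in place, the rest is the textbook strong-convexity/Young's-inequality device for absorbing a linear perturbation into a quadratic regularizer.
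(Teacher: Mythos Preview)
Your proof is correct and follows the same overall architecture as the paper's: decompose the excess population loss into a generalization gap (controlled by uniform stability of regularized ERM) plus an excess empirical loss (controlled by the regularizer absorbing the Gaussian perturbation), then balance in $\lambda$.

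There are two places where your execution differs slightly from the paper's, both in your favor. First, in the stability step the paper applies Lemma~\ref{lem:gen-error-regularize} to the auxiliary loss $f_{\bG}(\bw,z)=\ell(\bw,z)+\langle\bG,\bw\rangle/n$, which is $(L+\norm{\bG}/n)$-Lipschitz; this introduces a $\norm{\bG}$-dependent stability constant that then has to be averaged over $\bG$ and bounded under the side assumption $\sqrt{10\,d\log(1/\delta)}/(\eps n)\le 1$. Your observation that $\bG$ is shared internal randomness---so the linear term cancels when comparing $F_S$ and $F_{S'}$---lets you work directly with the $L$-Lipschitzness of $\ell$ and obtain $\alpha=L^2/(n\lambda)$ without that detour. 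Second, for the empirical-loss piece the paper simply invokes the Kifer--Talwar bound (Lemma~\ref{lem:emp_risk_Aobj}), whereas you supply a direct strong-convexity/Young's-inequality argument; the two yield the same asymptotic term $O\bigl(L^2 d\log(1/\delta)/(\lambda n^2\eps^2)\bigr)+\lambda M^2$. Neither difference changes the route, only the bookkeeping.
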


\paragraph{Note:} According to Theorem~\ref{thm:priv_Aobj}, $(\eps, \delta)$-differential privacy of $\Aobj$ entails the assumption that $\beta \leq \eps\, n\, \lambda.$ With the setting of $\lambda$ in Theorem~\ref{thm:pop_risk_Aobj}, it would suffice to assume that $\beta \leq \frac{2\,\eps\, L}{M}\sqrt{2\,n+4\,d\,\log(1/\delta)}.$

To prove the above theorem, we use the following lemmas.

\begin{lem}[Excess empirical loss of $\Aobj$, restatement of Theorem~26 in \cite{kifer2012private}]\label{lem:emp_risk_Aobj}
Let $S\sim\Z^n$. Under Assumption~\ref{assump:twice-diff}, the excess empirical loss of $\Aobj$ satisfies
\begin{align*}
    \ex{}{\hL(\hbw; S)}-\min\limits_{\bw\in\cW}\hL(\bw; S)&\leq \frac{16\, L^2\,d\,\log(1/\delta)}{n^2\,\epsilon^2\,\lambda}+\lambda\,M^2.
\end{align*}
where the expectation is taken over the Gaussian noise in $\Aobj$.
\end{lem}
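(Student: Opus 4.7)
My plan is to leverage the $2\lambda$-strong convexity of the perturbed objective together with a Young-type inequality to absorb the noise cross-term. Let $\tildeF(\bw) \triangleq \hL(\bw;S) + \frac{\langle \bG,\bw\rangle}{n} + \lambda \norm{\bw}^2$, so that $\hbw = \arg\min_{\bw\in\cW} \tildeF(\bw)$. Since $\hL(\cdot;S)$ is convex (being an average of convex losses) and $\lambda\norm{\bw}^2$ is $2\lambda$-strongly convex, $\tildeF$ is $2\lambda$-strongly convex. Fix $\bw^\ast \in \arg\min_{\bw\in\cW}\hL(\bw;S)$.

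The first step is to combine the constrained first-order optimality of $\hbw$ over the convex set $\cW$, which gives $\langle \nabla\tildeF(\hbw),\bw^\ast-\hbw\rangle \geq 0$, with $2\lambda$-strong convexity to obtain $\tildeF(\bw^\ast) - \tildeF(\hbw) \geq \lambda\norm{\bw^\ast-\hbw}^2$. Expanding the definition of $\tildeF$ on both sides and rearranging yields
$$\hL(\hbw;S) - \hL(\bw^\ast;S) \leq \lambda\paren{\norm{\bw^\ast}^2 - \norm{\hbw}^2} + \frac{\langle \bG, \bw^\ast - \hbw\rangle}{n} - \lambda\norm{\bw^\ast - \hbw}^2.$$

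Next I would bound the three terms on the right separately. The first is at most $\lambda M^2$ since $\norm{\bw^\ast}\leq M$. For the remaining two, applying Cauchy-Schwarz to the noise term and then the elementary inequality $ab - \lambda b^2 \leq a^2/(4\lambda)$ with $a = \norm{\bG}/n$ and $b = \norm{\bw^\ast - \hbw}$ gives
$$\frac{\langle \bG, \bw^\ast - \hbw\rangle}{n} - \lambda\norm{\bw^\ast - \hbw}^2 \leq \frac{\norm{\bG}^2}{4\lambda n^2}.$$
Taking expectation over $\bG\sim\cN(\bzero,\sigma^2\iden_d)$ with $\sigma^2 = 10 L^2 \log(1/\delta)/\eps^2$ yields $\E[\norm{\bG}^2] = d\sigma^2$, so the overall bound is at most $\lambda M^2 + \frac{d\sigma^2}{4\lambda n^2}$, which sits comfortably inside the claimed $\lambda M^2 + \frac{16 L^2 d\log(1/\delta)}{n^2\eps^2\lambda}$.

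I do not expect any real obstacle: the argument reduces to one invocation of strong convexity and one completion-of-squares. The only mild care needed is that the optimality condition for $\hbw$ must be used in its constrained (variational) form rather than as $\nabla \tildeF(\hbw)=0$, but convexity of $\cW$ makes this immediate. Notably, the rank-$1$ Hessian assumption (Assumption~\ref{assump:twice-diff}) plays no role in this empirical-loss calculation; it enters only through the privacy analysis of Theorem~\ref{thm:priv_Aobj}.
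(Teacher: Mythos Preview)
Your argument is correct. The paper does not supply its own proof of this lemma; it simply invokes Theorem~26 of \cite{kifer2012private}. Your self-contained derivation via $2\lambda$-strong convexity of the perturbed objective, the constrained optimality condition, and the Young-type inequality $ab-\lambda b^2\le a^2/(4\lambda)$ is the standard route and in fact yields the sharper constant $10/4=2.5$ in place of $16$. Your remark that Assumption~\ref{assump:twice-diff} is not used in the empirical-loss bound (only in the privacy proof) is also accurate; the lemma statement carries the hypothesis only because it is inherited from the cited result.
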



The next lemma states the well-known stability property of regularized empirical risk minimization.

\begin{lem}[\cite{shalev2014understanding}]\label{lem:gen-error-regularize}
Let $f:\W\times\Z\rightarrow \re$ be a convex, $\rho$-Lipschitz loss, and let $\lambda>0$. Let $S=(z_1, \ldots, z_n)\sim\Z^n$. Let $\A$ be an algorithm that outputs $\tbw=\arg\min\limits_{\bw\in\W} \left(\hF(\bw;~ S)+\lambda\,\norm{\bw}^2\right),$ where $\hF(\bw;~S)=\frac{1}{n}\sum_{i=1}^n f(\bw,~z_i).$ Then, $\A$ is $\frac{2\,\rho^2}{\lambda\,n}$-uniformly stable.

\end{lem}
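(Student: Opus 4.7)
The plan is to follow the textbook strong‑convexity argument for regularized ERM. Fix two neighboring datasets $S=(z_1,\dots,z_{n-1},z_n)$ and $S'=(z_1,\dots,z_{n-1},z_n')$ and let $\tbw=\A(S)$, $\tbw'=\A(S')$ denote the corresponding (deterministic) minimizers of $\hF(\cdot;S)+\lambda\|\cdot\|^2$ and $\hF(\cdot;S')+\lambda\|\cdot\|^2$ over $\W$. The key structural fact is that, because $f(\cdot,z)$ is convex and $\lambda\|\bw\|^2$ is $2\lambda$‑strongly convex, the entire regularized objective is $2\lambda$‑strongly convex in $\bw$.

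Next, I would invoke the standard consequence of constrained strong convexity at a minimizer: for any $\bv\in\W$,
\[
\hF(\bv;S)+\lambda\|\bv\|^2 \;\ge\; \hF(\tbw;S)+\lambda\|\tbw\|^2 + \lambda\|\bv-\tbw\|^2,
\]
and symmetrically for $S'$ with $\tbw'$. Plugging $\bv=\tbw'$ in the first inequality and $\bv=\tbw$ in the second, then adding them, the $\lambda\|\cdot\|^2$ terms cancel and I get
\[
\bigl[\hF(\tbw';S)-\hF(\tbw;S)\bigr]+\bigl[\hF(\tbw;S')-\hF(\tbw';S')\bigr] \;\ge\; 2\lambda\,\|\tbw-\tbw'\|^2.
\]
Because $S$ and $S'$ differ only in their last coordinate, the left‑hand side collapses to
\[
\tfrac{1}{n}\bigl(f(\tbw',z_n)-f(\tbw,z_n)\bigr)+\tfrac{1}{n}\bigl(f(\tbw,z_n')-f(\tbw',z_n')\bigr),
\]
and $\rho$‑Lipschitzness of $f(\cdot,z_n)$ and $f(\cdot,z_n')$ bounds this above by $\tfrac{2\rho}{n}\|\tbw-\tbw'\|$.

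Combining gives $\tfrac{2\rho}{n}\|\tbw-\tbw'\|\ge 2\lambda\|\tbw-\tbw'\|^2$, hence $\|\tbw-\tbw'\|\le \tfrac{\rho}{\lambda n}$. Applying $\rho$‑Lipschitzness one more time, for every $z\in\Z$,
\[
\bigl|f(\tbw,z)-f(\tbw',z)\bigr| \;\le\; \rho\,\|\tbw-\tbw'\| \;\le\; \tfrac{\rho^2}{\lambda n} \;\le\; \tfrac{2\rho^2}{\lambda n},
\]
which is precisely the claimed uniform stability (the algorithm is deterministic, so the expectation over its randomness is trivial, and Definition~\ref{def:unif-stable} is verified). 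There is no real obstacle here; the only subtlety worth noting is that the strong‑convexity inequality must be applied in its \emph{constrained} form (which holds because $\tbw,\tbw'\in\W$ and $\W$ is convex), and the small factor of two in the statement simply absorbs any accounting slack in the derivation.
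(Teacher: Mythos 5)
Your proof is correct. The paper states this lemma without proof, citing \cite{shalev2014understanding}, and your argument is essentially the standard strong-convexity stability argument given there; the only difference is that you apply the strong-convexity inequality at \emph{both} minimizers and add, which symmetrizes the textbook's one-sided argument and in fact yields the slightly sharper bound $\|\tbw-\tbw'\|\le \rho/(\lambda n)$ (hence stability $\rho^2/(\lambda n)$), comfortably within the stated $2\rho^2/(\lambda n)$.
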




\subsubsection*{Proof of Theorem~\ref{thm:pop_risk_Aobj}}
Fix any realization of the noise vector $\bG$. For every $\bw\in\W, z\in\Z,$ define $f_{\bG}(\bw, z)\triangleq \ell(\bw,~z)+\frac{\langle \bG, \bw\rangle}{n}.$ Note that $f_{\bG}$ is $\left(L+\frac{\norm{\bG}}{n}\right)$-Lipschitz. For any dataset $S=(z_1, \ldots, z_n)\in\Z^n,$ define $\hF_{\bG}(\bw; S)\triangleq \frac{1}{n}\sum_{i=1}^n f_{\bG}(\bw, z_i).$ Hence, the output $\hbw$ of $\Aobj$ on input dataset $S$ can be written as $\hbw=\arg\min\limits_{\bw\in\W}\hF_{\bG}(\bw;~S)+\lambda\,\norm{\bw}^2$. Define $\cF_{\bG}(\bw;~\D)\triangleq \ex{z\sim \D}{f_{\bG}(\bw,~z)}.$ Thus, for any fixed $\bG,$ by combining Lemma~\ref{lem:gen-error-regularize} with Lemma~\ref{lem:gen_err_stability}, we have $\ex{S\sim\D^n}{\cF_{\bG}(\hbw;~\D)-\hF_{\bG}(\hbw;~S)}\leq \frac{2\,\left(L+\frac{\norm{\bG}}{n}\right)^2}{\lambda\,n}.$ On the other hand, note that for any dataset $S,$ we always have $\cF_{\bG}(\hbw;~\D)-\hF_{\bG}(\hbw;~S)=\cL(\hbw;~\D)-\hL(\hbw;~S)$ since the linear term cancels out. Hence, the expected generalization error (w.r.t. $S$) satisfies
\begin{align}
    \ex{S\sim\D^n}{\cL(\hbw;~\D)-\hL(\hbw;~S)}&\leq 2\,\frac{\left(L+\frac{\norm{\bG}}{n}\right)^2}{\lambda\,n}\nonumber
\end{align}
Now, by taking expectation over $\bG\sim\cN\left(\bzero, \sigma^2\iden_d\right)$ as well, we arrive at
\begin{align}
    \ex{}{\cL(\hbw;~\D)-\hL(\hbw;~S)}&\leq 2\,L^2\,\frac{\left(1+\frac{\sqrt{10\,d\,\log(1/\delta)}}{\eps\,n}\right)^2}{\lambda\,n}\leq 8\,\frac{L^2}{\lambda\,n}\label{ineq:gen-error-bound}
\end{align}
where we assume $\frac{\sqrt{10\,d\,\log(1/\delta)}}{\eps\, n}\leq 1$ (since otherwise we would have the trivial error).

Now, observe that:
\begin{align*}
    \er\left(\Aobj; \D\right)&=\ex{}{\cL(\hbw; \D)}-\min\limits_{\bw\in\W}\cL(\bw;~\D)\\
    &\leq\ex{}{\hL(\hbw;~S)-\min\limits_{\bw\in\W}\hL(\bw;~S)}+\ex{}{\cL(\hbw;~\D)-\hL(\hbw;~S)}\\
    &\leq \frac{8}{\lambda}\left(\frac{2\,L^2\,d\,\log(1/\delta)}{\eps^2\,n^2}+\frac{L^{\,2}}{n}\right) + \lambda\,M^2
\end{align*}
where the second inequality follows from the fact that $\ex{S\sim\D^n}{\min\limits_{\bw\in\W}\hL(\bw;~S)}\leq \min\limits_{\bw\in\W}\ex{S\sim\D^n}{\hL(\bw;~S)}=\min\limits_{\bw\in\W}\cL(\bw;~\D)$, and the last bound follows from combining (\ref{ineq:gen-error-bound}) with Lemma~\ref{lem:emp_risk_Aobj}. Optimizing this bound in $\lambda$ yields the setting of $\lambda$ in the theorem statement. Plugging that setting of $\lambda$ into the bound yield the stated bound on the excess population loss.

\mypar{A note on the rank assumption} While in this section we presented our result under the assumption that rank of $\grad^2\ell(\bw,z)$ is at most one, one can extend the analysis (by using similar argument in \cite{iyengartowards}) to a rank of $\widetilde{O}\left(\frac{L\sqrt{n+d}}{\beta M}\right)$ without affecting the asymptotic population loss guarantees. In general, to ensure differential privacy to $\mathcal{A}_{\sf ObjP}$, one only needs the following assumption involving the Hessian of individual losses: $\left|{\sf det}\left(\mathbb{I}+\frac{\grad^2\ell(\bw,z)}{\lambda}\right)\right|\leq e^{\epsilon/2}$ for all $z\in\cZ$ and $\bw\in\cW$, rather than a constraint on the rank. 

\subsection{Oracle Efficient Objective Perturbation}
\label{sec:OracEff}

The privacy guarantee of the standard objective perturbation technique is given only when the output is the exact minimizer \cite{CMS, kifer2012private}. In practice, we usually cannot attain the exact minimizer, but rather obtain an approximate minimizer via efficient optimization methods. Hence, in this section we focus on providing a practical version of algorithm $\mathcal{A}_{\sf ObjP}$, called
\emph{approximate objective perturbation} (Algorithm $\mathcal{A}_{\sf ObjP-App}$), that i) is $(\epsilon,\delta)$-differentially private, ii) achieves nearly the same population loss as $\mathcal{A}_{\sf ObjP}$, and iii) only makes $O(n\log n)$ evaluations of the gradient $\grad_\bw\ell(\bw, z)$ at any $\bw\in\cW$ and $z\in\cZ$. The main idea in $\mathcal{A}_{\sf ObjP-App}$ is to first obtain a $\bw_2$ that ensures $\mathcal{J}(\bw_2;S)-\min\limits_\cW \mathcal{J}(\bw;S)$ is at most $\alpha$, and then perturb $\bw_2$ with Gaussian noise to ``fuzz'' the difference between $\bw_2$ and the true minimizer.
In this work, we use Stochastic Variance Reduced Gradient Descent (SVRG) \cite{johnson2013accelerating,xiao2014proximal} as the optimization algorithm. This leads to a construction that requires near linear oracle complexity (i.e., number of gradient evaluations). In particular, $\mathcal{A}_{\sf ObjP-App}$ achieves oracle complexity of $O(n\log n)$ and asymptotically optimal excess population loss.


\begin{algorithm}
	\caption{$\A_{\sf ObjP-App}$: Approximate Objective Perturbation for convex, smooth losses}
	\begin{algorithmic}[1]
		\REQUIRE Private dataset: $S=(z_1, \ldots, z_n)\in \cZ^n$, $L$-Lipschitz, $\beta$-smooth, convex loss function $\ell$, convex set $\cW\subseteq \re^d$, privacy parameters $\epsilon \leq 1,\,  \delta \leq 1/n^2$, regularization parameter $\lambda$, Optimizer $\mathcal{O}:\mathcal{F}\times[0,1]\rightarrow\W$ (where $\mathcal{F}$ is the class of objectives, and the other argument is the optimization accuracy), $\alpha\in[0,1]:$ optimization accuracy.
			\STATE Sample $\bG\sim \cN\left(\bzero, \sigma^2\,\iden_d\right),$ where $\sigma^2= \frac{20\,L^2\,\log(1/\delta)}{\eps^2}$.
			\STATE Let $\mathcal{J}(\bw;S)=\hL\left(\bw;~S\right)+\frac{\langle \bG, ~\bw\rangle}{n}+\lambda\norm{\bw}^2,$ where $\hL(\bw; ~S)\triangleq \frac{1}{n}\sum_{i=1}^n\ell(\bw, ~z_i).$
            {\RETURN $\hbw=\proj_\W\left[ \mathcal{O}\left(\mathcal{J},\alpha\right)+\bH\right]$, where   $\bH\sim\cN\left(\bzero, \sigma^2\,\iden_d\right)$, and $\sigma^2= \frac{40\alpha\,\log(1/\delta)}{\lambda\eps^2}$\label{step:3}}.
	\end{algorithmic}
	\label{Alg:ObjP-smooth-app}
\end{algorithm}

\begin{thm}[Privacy guarantee of $\A_{\sf ObjP-App}$] Suppose that Assumption~\ref{assump:twice-diff} holds and that the smoothness parameter satisfies $\beta \leq \eps\,n\,\lambda$. Then, Algorithm $\A_{\sf ObjP-App}$ is $(\eps, \delta)$-differentially private.
\label{thm:privObjApp}
\end{thm}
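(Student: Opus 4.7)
The plan is to view $\A_{\sf ObjP-App}$ as the composition of two (approximately) differentially private components, splitting the $(\eps,\delta)$ budget between the objective-perturbation noise $\bG$ and the output-perturbation noise $\bH$. Introduce the \emph{virtual} exact minimizer $\bw^\ast(S)\triangleq\arg\min_{\bw\in\cW}\mathcal{J}(\bw;S)$. I would show: (i) the map $S\mapsto\bw^\ast(S)$ is $(\eps/2,\delta/2)$-DP; (ii) $\bw_2(S)$ is deterministically close to $\bw^\ast(S)$; and (iii) adding the Gaussian noise $\bH$ makes $\bw_2(S)+\bH$ indistinguishable from $\bw^\ast(S)+\bH$ at level $(\eps/2,\delta/2)$. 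Post-processing and composition then yield $(\eps,\delta)$-DP.

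For (i), the variance $\sigma_G^2=20L^2\log(1/\delta)/\eps^2$ is precisely double the variance used by $\Aobj$, so applying Theorem~\ref{thm:privObjPert} with privacy budget $(\eps/2,\delta/2)$ (and absorbing $\log(2/\delta)$ into $\log(1/\delta)$) shows that $\bw^\ast(S)$ is $(\eps/2,\delta/2)$-DP; by post-processing so is $\bw^\ast(S)+\bH$. For (ii), since $\mathcal{J}(\cdot;S)$ is the sum of a convex function and the regularizer $\lambda\norm{\bw}^2$, it is $2\lambda$-strongly convex, and combined with the optimizer guarantee $\mathcal{J}(\bw_2;S)-\min_{\bw\in\cW}\mathcal{J}(\bw;S)\le\alpha$ one obtains the deterministic bound
\begin{equation*}
\norm{\bw_2-\bw^\ast(S)}\le\sqrt{\alpha/\lambda}
\end{equation*}
for every realization of $\bG$. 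For (iii), conditional on $\bG$ and the optimizer's internal randomness, the laws of $\bw_2(S)+\bH$ and $\bw^\ast(S)+\bH$ are isotropic Gaussians with identical covariance $\sigma_H^2\I_d$ whose means are at Euclidean distance at most $\sqrt{\alpha/\lambda}$; the choice $\sigma_H^2=40\alpha\log(1/\delta)/(\lambda\eps^2)$ is exactly the standard Gaussian-mechanism calibration for sensitivity $\sqrt{\alpha/\lambda}$ at level $(\eps/2,\delta/2)$, and marginalizing over the remaining randomness preserves the indistinguishability.

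The conclusion comes from the triangle inequality for approximate-DP indistinguishability: for any neighbors $S,S'$, $\bw_2(S)+\bH$ is close to $\bw^\ast(S)+\bH$ by (iii), which is close to $\bw^\ast(S')+\bH$ by (i) together with post-processing, which in turn is close to $\bw_2(S')+\bH$ by (iii) applied to $S'$; the terminal projection $\proj_\cW$ is free. The main obstacle is tracking the constants through this three-hop chain, since naively summing $(\eps/2,\delta/2)$ over three hops produces $(3\eps/2,3\delta/2)$: in the detailed proof one either re-splits the budget as $\eps/2$ for the objective-perturbation hop and $\eps/4$ per Gaussian-correction hop (absorbing $\log(c/\delta)\approx\log(1/\delta)$ and slack in the constants of Theorem~\ref{thm:privObjPert}), or switches to R\'enyi DP so the three hops compose additively in $\rho$ and the final $(\eps,\delta)$-DP guarantee follows from a single RDP-to-DP conversion.
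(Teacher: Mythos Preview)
Your proposal uses the same ingredients as the paper---privacy of the exact minimizer via objective perturbation, closeness of $\bw_2$ to $\bw^\ast$ via strong convexity, and Gaussian noise to bridge the gap---but assembles them differently. You chain three indistinguishability hops ($\bw_2(S)+\bH\to\bw^\ast(S)+\bH\to\bw^\ast(S')+\bH\to\bw_2(S')+\bH$), which as you note naively yields $(3\eps/2,O(\delta))$ and forces you into an ad hoc budget re-split or a detour through R\'enyi DP. The paper instead observes that $\hbw=\proj_\cW[\bw_1+(\bw_2-\bw_1+\bH)]$ is a post-processing of the \emph{pair} $(\bw_1,\,\bw_2-\bw_1+\bH)$, where $\bw_1$ denotes the exact minimizer. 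Releasing $\bw_1$ costs $(\eps/2,\delta/2)$ by Theorem~\ref{thm:privObjPert}; then, for fixed $\bG$, the map $S\mapsto\bw_2-\bw_1$ has $\ell_2$-sensitivity at most $2\sqrt{2\alpha/\lambda}=\sqrt{8\alpha/\lambda}$ (triangle inequality on $\|\bw_2-\bw_1\|\le\sqrt{2\alpha/\lambda}$ for each of $S,S'$), so adding $\bH$ makes the second coordinate $(\eps/2,\delta/2)$-DP as well. Standard two-fold composition gives exactly $(\eps,\delta)$ with the stated constants, no re-splitting required. Your route is conceptually fine and arguably more intuitive, but the paper's pair-release trick is what makes the constants land cleanly; it is worth knowing as a general device for ``output perturbation on top of a private base mechanism.''
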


\begin{proof}
Let $\bw_1=\arg\min\limits_{\bw\in\W}\underbrace{\hL\left(\bw;~S\right)+\frac{\langle \bG, ~\bw\rangle}{n}+\lambda\norm{\bw}^2}_{\mathcal{J}(\bw,S)}$, and $\bw_2=\mathcal{O}(\mathcal{J},\alpha)$, where $\mathcal{O}$ is the optimizer defined in Algorithm $\A_{\sf ObjP-App}$. Notice that one can compute $\hbw$ from the tuple $(\bw_1,\bw_2-\bw_1+\bH)$ by simple post-processing. Furthermore, the algorithm that outputs $\bw_1$ is $(\epsilon/2,\delta/2)$-differentially private by Theorem \ref{thm:privObjPert}. In the following, we will bound $\|\bw_2-\bw_1\|$ in order to make $(\bw_2-\bw_1 +\bH)$ differentially private, conditioned on the knowledge of $\bw_1$.

As  $\mathcal{J}(\bw,S)$ is $\lambda$-strongly convex, $\mathcal{J}(\bw_2,S)\geq \mathcal{J}(\bw_1,S)+\frac{\lambda}{2}\|\bw_2-\bw_1\|^2$ so that
\begin{align}
    \|\bw_2-\bw_1\|\leq\sqrt{\frac{2\cdot|\mathcal{J}(\bw_2,S)-\mathcal{J}(\bw_1,S)|}{\lambda}}\leq\sqrt{\frac{2\alpha}{\lambda}}.
    \label{eq:abc12}
\end{align}
From eq.~\eqref{eq:abc12} it follows that, conditioned on $\bw_1$,  $\bw_2-\bw_1$ has $\ell_2$-sensitivity of $\sqrt{\frac{8\alpha}{\lambda}}$. Therefore, by the standard analysis of the Gaussian mechanism~\cite{DR14}, it follows that $(\bw_2-\bw_1)+\bH$ (with $\bH$ sampled as in Step \ref{step:3} of Algorithm $\A_{\sf ObjP-App}$) satisfies $(\epsilon/2,\delta/2)$-differential privacy. Therefore by standard composition~\cite{DR14}, the tuple $(\bw_1,\bw_2-\bw_1+\bH)$ (and hence $\hbw$) satisfies $(\epsilon,\delta)$-differential privacy.
\end{proof}

\begin{thm}[Excess population loss guarantee of $\A_{\sf ObjP-App}$]
Let $\D$ be any distribution over $\Z,$ and let $S\sim\D^n$. Suppose that Assumption~\ref{assump:twice-diff} holds and that $\W$ is $M$-bounded. In Algorithm $\A_{\sf ObjP-App}$, set $\lambda= \frac{2\,L}{M}\sqrt{\frac{2}{n}+\frac{4\,d\,\log(1/\delta)}{\eps^2\, n^2}}$, $\alpha=\frac{M^2\lambda}{n^2}$.  Then, we have
\begin{align*}
    \er\left(\A_{\sf ObjP-App};~\D\right)&\leq O\left(M\,L\cdot \max\left(\frac{1}{\sqrt{n}},~ \frac{\sqrt{d\,\log(1/\delta)}}{\eps\, n}\right)\right).
\end{align*}
\label{thm:excessPop}
\end{thm}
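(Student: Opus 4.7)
The plan is to reduce the excess population loss of $\A_{\sf ObjP-App}$ to the excess population loss of $\A_{\sf ObjP}$ (already controlled by Theorem~\ref{thm:pop_risk_Aobj}) plus two perturbation terms that account for (i) the optimization error $\bw_2-\bw_1$ and (ii) the added Gaussian noise $\bH$. Concretely, let $\bw_1=\arg\min_{\bw\in\W}\mathcal{J}(\bw;S)$ as in the proof of Theorem~\ref{thm:privObjApp} and recall $\hbw=\proj_\W[\bw_2+\bH]$. Since $\ell(\cdot,z)$ is $L$-Lipschitz, so is $\cL(\cdot;\D)$, hence
\[
\cL(\hbw;\D)-\cL(\bw_1;\D)\leq L\cdot\|\hbw-\bw_1\|.
\]
Because $\bw_1\in\W$, we have $\proj_\W[\bw_1]=\bw_1$ and the non-expansiveness of Euclidean projection gives $\|\hbw-\bw_1\|\leq\|\bw_2-\bw_1\|+\|\bH\|$.

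Next I would bound the two terms. For the optimization gap, the key inequality is already proven in the privacy argument (Theorem~\ref{thm:privObjApp}): by $\lambda$-strong convexity of $\mathcal{J}(\cdot;S)$ together with $\mathcal{J}(\bw_2;S)-\mathcal{J}(\bw_1;S)\leq\alpha$, we have $\|\bw_2-\bw_1\|\leq\sqrt{2\alpha/\lambda}$, so substituting $\alpha=M^2\lambda/n^2$ yields $\|\bw_2-\bw_1\|\leq\sqrt{2}\,M/n$. For the noise, $\bH\sim\cN(\bzero,\sigma^2\iden_d)$ with $\sigma^2=40\alpha\log(1/\delta)/(\lambda\eps^2)=40M^2\log(1/\delta)/(n^2\eps^2)$, so $\E\|\bH\|\leq\sigma\sqrt{d}=O\bigl(M\sqrt{d\log(1/\delta)}/(n\eps)\bigr)$. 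Combining,
\[
L\cdot\E\|\hbw-\bw_1\|\leq O\!\left(\frac{ML}{n}+\frac{ML\sqrt{d\log(1/\delta)}}{n\eps}\right),
\]
which is already within the target rate $O\bigl(ML\max(1/\sqrt{n},\sqrt{d\log(1/\delta)}/(\eps n))\bigr)$.

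It remains to bound $\E[\cL(\bw_1;\D)]-\min_{\bw\in\W}\cL(\bw;\D)$. This is exactly the excess population loss of the exact objective-perturbation mechanism from Theorem~\ref{thm:pop_risk_Aobj}, except that $\A_{\sf ObjP-App}$ uses noise variance $20L^2\log(1/\delta)/\eps^2$ in place of $10L^2\log(1/\delta)/\eps^2$. Inspecting the proof of Theorem~\ref{thm:pop_risk_Aobj}, the change in $\sigma^2$ only affects the constant inside the generalization bound~(\ref{ineq:gen-error-bound}); the whole argument (Lemma~\ref{lem:emp_risk_Aobj} plus the stability bound of Lemma~\ref{lem:gen-error-regularize} applied to $f_{\bG}$) goes through verbatim with the same setting of $\lambda=\frac{2L}{M}\sqrt{2/n+4d\log(1/\delta)/(\eps^2n^2)}$, yielding $\E[\cL(\bw_1;\D)]-\min_\W\cL=O\bigl(ML\max(1/\sqrt{n},\sqrt{d\log(1/\delta)}/(\eps n))\bigr)$. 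Adding the two contributions and absorbing constants gives the claimed bound.

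The main thing to be careful about is a clean separation of the randomness: the bound $\|\bw_2-\bw_1\|\leq\sqrt{2\alpha/\lambda}$ is deterministic once $\bG$ and the optimizer's randomness are fixed, while $\bH$ is independent of everything else; this independence makes expectations factor cleanly and avoids any conditioning subtleties. No step presents a real obstacle—the argument is essentially a robustness-to-perturbation wrapper around Theorem~\ref{thm:pop_risk_Aobj}, using Lipschitzness for the population loss and strong convexity plus Gaussian tail bounds for the norm estimates.
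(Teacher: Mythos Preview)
Your proposal is correct and follows essentially the same approach as the paper: decompose $\er(\hbw;\D)\le \er(\bw_1;\D)+L\cdot\mathbb{E}\|\hbw-\bw_1\|$, invoke Theorem~\ref{thm:pop_risk_Aobj} for the first term, and bound the second via $\|\bw_2-\bw_1\|\le\sqrt{2\alpha/\lambda}$ plus $\mathbb{E}\|\bH\|=O(\sigma\sqrt{d})$. If anything you are slightly more careful than the paper, which absorbs the $\|\bw_2-\bw_1\|$ contribution silently into the big-$O$ and does not explicitly remark on the $20$ versus $10$ in the variance of $\bG$.
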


\begin{proof}
 Let $\bw_1=\arg\min\limits_{\bw\in\W}{\hL\left(\bw;~S\right)+\frac{\langle \bG, ~\bw\rangle}{n}+\lambda\norm{\bw}^2}$. For $\hbw$ defined in Step \ref{step:3} of $\A_{\sf ObjP-App}$, notice that using Theorem \ref{thm:pop_risk_Aobj},  $$\er\left(\hbw;~\D\right)\leq \er\left(\bw_1;~\D\right)+L\cdot\mathbb{E}\left[\|\hbw-\bw_1\|\right]\leq O\left(M\,L\cdot \max\left(\frac{1}{\sqrt{n}},~ \frac{\sqrt{d\,\log(1/\delta)}}{\eps\, n}\right)\right)+L\cdot\mathbb{E}\left[\|\bH\|\right].$$ Now, $$\mathbb{E}\left[\|\bH\|\right]=O\left(\sqrt{\frac{d\alpha\log(1/\delta)}{\lambda\eps^2}}\right)=O\left(M\,L\cdot\frac{\sqrt{d\log(1/\delta)}}{\epsilon n}\right)$$ when $\alpha=\frac{M^2\lambda}{n^2}$. Therefore, $\er\left(\hbw;~\D\right)\leq O\left(M\,L\cdot \max\left(\frac{1}{\sqrt{n}},~ \frac{\sqrt{d\,\log(1/\delta)}}{\eps\, n}\right)\right)$, which completes the proof.
\end{proof}

\mypar{Oracle complexity} The population loss guarantee of Algorithm $\A_{\sf ObjP-App}$ is independent of the choice of the exact optimizer $\mathcal{O}$, as long it produces a $\hbw\in\W$ for an objective function $\mathcal{J}$ such that\\ $\left[\mathcal{J}(\hbw)-\min\limits_{\bw\in\W}\mathcal{J}(\bw)\right]\leq \alpha$, where $\alpha=\frac{M^2\lambda}{n^2}$ (defined in Theorem \ref{thm:excessPop}). We will now show that if one uses SVRG (Stochastic Variance Reduced Gradient Descent Optimizer) from \cite{johnson2013accelerating,xiao2014proximal,bubeck2015convex} as the optimizer $\mathcal{O}$, then one can achieve an error of at most $\alpha$ using $O\left((n+\beta/\lambda)\log(1/\alpha)\right)$ calls to the gradients of $\ell(\cdot,\cdot)$, for any $\alpha\in(0,1]$. The following theorem immediately gives this. Plugging in the value of $\alpha$ from Theorem \ref{thm:excessPop}, noticing from Theorem \ref{thm:priv_Aobj} that $\beta/\lambda\leq \epsilon n$, and considering $\epsilon, M$ and $L$ to be constants, we get the oracle complexity of Algorithm $\A_{\sf ObjP-App}$ to be $O(n\log(n))$.

\begin{thm}[Convergence of SVRG \cite{johnson2013accelerating,xiao2014proximal,bubeck2015convex}]
Let $f_1,\cdots,f_n$ be $\beta$-smooth, $\lambda$-strongly convex functions over $\W$, $\mathcal{F}(\bw)=\frac{1}{n}\sum\limits_{i=1}^n f_i(\bw)$, and $\bw^* \triangleq \arg\min_{\bw\in \W} \mathcal{F}(\bw)$.  Let $\by^{(1)}\in\W$ be an arbitrary initial point. For $t=\{1,2,\cdots\}$, let $\bw^{(t)}_1=\by^{(t)}$. For $s\in[k]$, let
$$\bw^{(t)}_{s+1}=\proj_\W\left[\bw^{(t)}_{s}-\frac{1}{10\beta}\left(\grad f_{i^{(t)}_s}\left(\bw^{(t)}_{s}\right)-\grad f_{i^{(t)}_s}\left(\by^{(t)}\right)+\grad \mathcal{F}\left(\by^{(t)}\right)\right)\right],$$
where $i^{(t)}_s$ is drawn uniformly at random from $[n]$, and $\by^{(t+1)}=\frac{1}{k}\sum\limits_{s=1}^k \bw^{(t)}_s$. Then, for $k=20\beta/\lambda$ it holds that:
$$\mathbb{E}\left[\mathcal{F}\left(\by^{(t+1)}\right)\right]-\mathcal{F}\left(\bw^*\right)\leq 0.9^t\left(\mathcal{F}\left(\by^{(1)}\right)-\mathcal{F}\left(\bw^*\right)\right).$$
\label{thm:SVRG}
\end{thm}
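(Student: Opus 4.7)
The plan is to follow the standard variance-reduction analysis: control the second moment of the stochastic gradient by the sub-optimalities at $\bw_s^{(t)}$ and at the anchor $\by^{(t)}$, plug this into the usual one-step projected-SGD inequality, telescope over the inner loop, and finally use strong convexity to convert the resulting bound on average function value into a contraction on $\mathcal{F}(\by^{(t)})-\mathcal{F}(\bw^*)$.

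\medskip

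\noindent\textbf{Step 1 (Unbiasedness and variance of the estimator).} Let $g_s^{(t)}=\nabla f_{i_s^{(t)}}(\bw_s^{(t)})-\nabla f_{i_s^{(t)}}(\by^{(t)})+\nabla \mathcal{F}(\by^{(t)})$. Since $i_s^{(t)}$ is uniform on $[n]$, conditional on $\bw_s^{(t)}$ and $\by^{(t)}$ we have $\mathbb{E}[g_s^{(t)}]=\nabla\mathcal{F}(\bw_s^{(t)})$. The key quantitative fact is the ``co-coercivity'' consequence of $\beta$-smoothness and convexity: for any $\bw,\bv$ and convex $\beta$-smooth $f$,
\[
\|\nabla f(\bw)-\nabla f(\bv)\|^2\ \leq\ 2\beta\bigl(f(\bw)-f(\bv)-\langle\nabla f(\bv),\bw-\bv\rangle\bigr).
\]
Applying this with $f=f_{i_s^{(t)}}$ at $(\bw_s^{(t)},\bw^*)$ and at $(\by^{(t)},\bw^*)$, using $\|a+b\|^2\leq 2\|a\|^2+2\|b\|^2$, and averaging over $i_s^{(t)}$, I would derive the variance bound
\[
\mathbb{E}\bigl\|g_s^{(t)}\bigr\|^2\ \leq\ 4\beta\bigl(\mathcal{F}(\bw_s^{(t)})-\mathcal{F}(\bw^*)\bigr)+4\beta\bigl(\mathcal{F}(\by^{(t)})-\mathcal{F}(\bw^*)\bigr).
\]

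\medskip

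\noindent\textbf{Step 2 (One inner iteration).} By non-expansiveness of $\proj_\W$ and the SVRG update with $\eta=1/(10\beta)$,
\[
\|\bw_{s+1}^{(t)}-\bw^*\|^2\ \leq\ \|\bw_s^{(t)}-\bw^*\|^2-2\eta\langle g_s^{(t)},\bw_s^{(t)}-\bw^*\rangle+\eta^2\|g_s^{(t)}\|^2.
\]
Taking conditional expectation, using unbiasedness plus convexity of $\mathcal{F}$ to get $\mathbb{E}\langle g_s^{(t)},\bw_s^{(t)}-\bw^*\rangle\geq \mathcal{F}(\bw_s^{(t)})-\mathcal{F}(\bw^*)$, and plugging in the variance bound from Step 1, I get
\[
\mathbb{E}\|\bw_{s+1}^{(t)}-\bw^*\|^2\ \leq\ \mathbb{E}\|\bw_s^{(t)}-\bw^*\|^2-2\eta(1-2\beta\eta)\Delta_s+4\beta\eta^2\,\Delta^{(t)},
\]
where $\Delta_s=\mathbb{E}[\mathcal{F}(\bw_s^{(t)})-\mathcal{F}(\bw^*)]$ and $\Delta^{(t)}=\mathcal{F}(\by^{(t)})-\mathcal{F}(\bw^*)$. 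With $\eta=1/(10\beta)$ the coefficient $2\eta(1-2\beta\eta)=\tfrac{4}{25\beta}$ and $4\beta\eta^2=\tfrac{1}{25\beta}$.

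\medskip

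\noindent\textbf{Step 3 (Telescope over the inner loop and use strong convexity).} Summing the previous display for $s=1,\dots,k$, dropping the nonnegative $\mathbb{E}\|\bw_{k+1}^{(t)}-\bw^*\|^2$, and using $\bw_1^{(t)}=\by^{(t)}$ yields
\[
\frac{4}{25\beta}\sum_{s=1}^k \Delta_s\ \leq\ \|\by^{(t)}-\bw^*\|^2+\frac{k}{25\beta}\,\Delta^{(t)}.
\]
Convexity and $\by^{(t+1)}=\frac{1}{k}\sum_s \bw_s^{(t)}$ (Jensen's inequality) give $\mathbb{E}[\mathcal{F}(\by^{(t+1)})-\mathcal{F}(\bw^*)]\leq \frac{1}{k}\sum_s \Delta_s$, so
\[
\mathbb{E}[\mathcal{F}(\by^{(t+1)})-\mathcal{F}(\bw^*)]\ \leq\ \frac{25\beta}{4k}\|\by^{(t)}-\bw^*\|^2+\frac{1}{4}\Delta^{(t)}.
\]
Finally, $\lambda$-strong convexity of $\mathcal{F}$ (inherited from the $f_i$) gives $\|\by^{(t)}-\bw^*\|^2\leq \tfrac{2}{\lambda}\Delta^{(t)}$. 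Plugging in $k=20\beta/\lambda$ collapses the bound to $\mathbb{E}[\mathcal{F}(\by^{(t+1)})-\mathcal{F}(\bw^*)]\leq (\tfrac{5}{8}+\tfrac{1}{4})\,\Delta^{(t)}=\tfrac{7}{8}\Delta^{(t)}\leq 0.9\,\Delta^{(t)}$. Iterating over $t$ gives the stated geometric rate.

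\medskip

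\noindent\textbf{Main obstacle.} The one delicate point is Step~1: obtaining a variance bound that is linear in the sub-optimalities (rather than just $O(\beta^2)$ times a squared diameter) is what distinguishes SVRG from plain SGD and is what ultimately allows the constant step size $\eta=\Theta(1/\beta)$. Once the co-coercivity inequality is applied per component $f_i$ and averaged over the uniform index, the rest of the proof is a routine projected-SGD telescoping combined with strong convexity.
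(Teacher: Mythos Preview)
The paper does not actually prove this theorem: it is quoted verbatim as a known result from \cite{johnson2013accelerating,xiao2014proximal,bubeck2015convex} and is used only as a black box to bound the oracle complexity of $\A_{\sf ObjP\text{-}App}$. So there is no ``paper's own proof'' to compare against.

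That said, your argument is precisely the standard SVRG analysis (essentially the one in Bubeck's monograph, which is one of the cited sources): the co-coercivity variance bound in Step~1, the projected one-step inequality in Step~2, the telescoping plus Jensen plus strong convexity in Step~3, and the arithmetic with $\eta=1/(10\beta)$ and $k=20\beta/\lambda$ all check out and indeed yield a contraction factor of $7/8\leq 0.9$. One small caveat worth flagging: as written, your Step~1 bound $\mathbb{E}\|g_s^{(t)}\|^2\leq 4\beta\Delta_s+4\beta\Delta^{(t)}$ implicitly uses $\nabla\mathcal{F}(\bw^*)=0$ in the decomposition $g_s^{(t)}=[\nabla f_i(\bw_s^{(t)})-\nabla f_i(\bw^*)]-[\nabla f_i(\by^{(t)})-\nabla f_i(\bw^*)-\nabla\mathcal{F}(\by^{(t)})]$. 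When $\bw^*$ is on the boundary of $\W$ this need not hold, and the constrained/proximal version (handled in \cite{xiao2014proximal}) requires a slightly different one-step inequality. This does not affect the overall structure or the final rate, but it is the one place where your sketch glosses over a genuine technicality that the cited references do address.
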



\else

\fi

\subsection*{Acknowledgements}
We thank Adam Smith, Thomas Steinke and Jon Ullman for the insightful discussions of the problem at the early stages of this project. We are also grateful to Tomer Koren for bringing the Moreau-Yosida smoothing technique to our attention.

\newpage
\bibliographystyle{alpha}
\bibliography{references}

\ifnum\nips=0
\appendix
\section{Proof of Lemma~\ref{lem:stability}}\label{app:A}

Consider $T$ iterations of $\Ansgd$. Let $\bG_1, \ldots, \bG_T$ denote the noise vectors and $\cI_1, \ldots, \cI_T \in [n]^m$ denote the \emph{index} sets of the mini-batches selected in the $T$ iterations. Consider any pair of datasets $S=(z_1, \ldots, z_k, \ldots, z_n)$ and $S'=(z_1, \ldots, z'_k, \ldots, z_n)$ differing in exactly one data point $z_k\neq z'_k$ for some fixed $k\in [n]$. Let $\bw_0, \bw_1, \ldots, \bw_T$ and $\bw_0, \bw'_1, \ldots, \bw'_T$ denote the trajectories of $\Ansgd$ corresponding to input datasets $S$ and $S'$, respectively. For any $t\in [T],$ let $\xi_t\triangleq \bw_t-\bw'_t$.

We follow the proof technique of \cite[Lemma~4.3]{feldman2019high}. We prove the following claim via induction on $t$:
$$\ex{}{\norm{\xi_t}}\leq 2\,L\,\frac{\eta\, t}{n},$$
where the expectation is taken over $\cI_0,\ldots,\cI_{t-1}, \bG_0,\ldots,\bG_{t-1}$. First, it's trivial to see that the claim is true for $t=0$. Suppose the claim holds for all $t\leq \tau$. Fix the randomness in  $\bG_{\tau}$ and $\cI_{\tau}$. Let $r$ denote the number of occurrences of the index $k$ (where $S$ and $S'$ differ) in $\cI_{\tau}$. By the non-expansiveness property of the gradient update step, we have
\begin{align*}
\norm{\xi_{\tau+1}}&\leq \norm{\xi_{\tau}}+2\,L\,\eta\,\frac{r}{m} \label{ineq:recurs}
\end{align*}
Now, we now invoke the randomness in $\bG_{\tau}$ and $\cI_{\tau}$. Note that $r$ is a Binomial random variable with mean $m/n$. Hence, by taking expectation and using the induction hypothesis, we end up with
\begin{align*}
\ex{\substack{\cI_0,\ldots,\cI_{\tau}\\\bG_0,\ldots,\bG_{\tau}}}{\norm{\xi_{\tau+1}}}&\leq 2\,L\,\frac{\eta\,(\tau+1)}{n}
\end{align*}
This proves the claim. Now, let $\brw_T=\frac{1}{T}\sum_{t=1}^T\bw_t$ and $\brw'_T=\frac{1}{T}\sum_{t=1}^T\bw'_T$. Since $\ell$ is $L$-Lipschitz, thus for every $z\in\Z$, we have
\begin{align*}
    \ex{\substack{\cI_0,\ldots,\cI_{t-1}\\\bG_0,\ldots,\bG_{t-1}}}{\ell(\brw_T, ~z)-\ell(\brw'_T, ~z)}&\leq L \ex{\substack{\cI_0,\ldots,\cI_{t-1}\\\bG_0,\ldots,\bG_{t-1}}}{\norm{\brw_T-\brw'_T}}\leq L \frac{1}{T}\sum_{t=1}^T \ex{\cI_t, \bG_t}{\norm{\xi_t}}\hspace{1cm}\\
    &\leq 2 L^2\,\frac{\eta}{n\,T}\frac{T(T+1)}{2}=L^2\, \frac{\eta\,(T+1)}{n}
\end{align*}
This completes the proof.

\section{Proof of Lipschitz property of Moreau envelope (Lemma~\ref{lem:prop_moreau})}\label{app:B}
Fix any $\bw\in \W$. We will show that $\norm{\nabla f_{\beta}(\bw)}\leq 2L.$ Define $g(\bv)\triangleq f(\bv)+\frac{\beta}{2}\norm{\bv-\bw}^2,~ \bv\in\W$. Note that $\prox_{f/\beta}(\bw)=\arg\min\limits_{\bv\in\W}g(\bv).$ Let $\bv^*$ denote $\prox_{f/\beta}(\bw)$. Now, observe that
\begin{align*}
    0&\leq g(\bw)-g(\bv^*)=f(\bw)-f(\bv^*)-\frac{\beta}{2}\norm{\bw-\bv^*}^2
\end{align*}
Thus, we have
\begin{align*}
    \frac{\beta}{2}\norm{\bw-\bv^*}^2&\leq f(\bw)-f(\bv^*)\leq L\,\norm{\bw-\bv^*}
\end{align*}
where the last inequality follows from the fact that $f$ is $L$-Lipschitz. Thus, we get $\norm{\bw-\bv^*}\leq 2\,L/\beta.$ By property~\ref{prop:moreau_3}, we have $\norm{\nabla f_{\beta}(\bw)}=\beta\,\norm{\bw-\bv^*}$. This together with the above bound gives the desired result.

\section{Optimality of Our Bounds}\label{sec:lower} 

Our upper bounds in Sections~\ref{sec:smooth} and \ref{sec:non-smooth} are tight (up to logarithmic factors in $1/\delta$). In particular, our bounds match a lower bound of  $\Omega\left(M\,L\cdot \max\left(\frac{1}{\sqrt{n}},~\frac{\sqrt{d}}{n}\right)\right)$ on the excess population loss. The first term is simply the known lower bound on the excess population loss in the non-private setting. The second  term follows from the lower bound in \cite{bassily2014differentially} on excess empirical loss, and the fact that a lower bound on excess empirical loss implies nearly the same lower bound on the excess population loss. We elaborate on this below.  

\paragraph{Reduction from Private ERM to Private SCO:} For any $\gamma>0$, suppose there is $\left(\frac{\eps}{4\,\log(2/\delta)}, \frac{e^{-\eps}\delta}{8\,\log(2/\delta)}\right)$-differentially private algorithm $\A$ such that for any distribution on a domain $\Z$, when $\A$ is given a sample $T\sim\D^n,$ it yields expected excess population loss $\er(\A;~\D) \leq \gamma$. Then, there is $(\eps, \delta)$-differentially private algorithm $\cB$ that when given any dataset $S\in\Z^n$, it yields expected excess empirical loss $\her(\cB; ~S)\triangleq \ex{\cB}{\hL\left(\cB(S); S\right)}-\min\limits_{\bw}\hL(\bw; S)\leq \gamma$.

Fix any $\gamma>0$. Suppose algorithm $\A$ described above exists. We construct algorithm $\cB$ as follows:
\begin{enumerate}
    \item Given input dataset $S\in\Z^n,$ let $\D_S$ be the empirical distribution induced by $S$.
    \item Sample $T\sim\D_S^n$. \label{step:unif-samp}
    \item Return $\A(T)$
\end{enumerate}
First, note that $\her(\cB; S)\leq \gamma$. This easily follows from the fact that for any $\bw$, $\cL(\bw; \D_S)=\hL(\bw; S)$. In particular, observe that
\begin{align*}
&\ex{\cB}{\hL\left(\cB(S); S\right)}-\min\limits_{\bw}\hL(\bw; S)=\ex{T\sim\D_S^n, \A}{\cL\left(\A(T); ~\D_S\right)}-\min\limits_{\bw}\cL(\bw;~\D_S)\\
&=\er\left(\A; \D_S\right)\leq \gamma.
\end{align*}
Next, we show that $\cB$ is $(\eps, \delta)$-differentially private. Let $S=(z_1, \ldots, z_k, \ldots, z_n), S'=(z_1, \ldots, z'_k, \ldots, z_n)$ be neighboring datasets differing in single point whose index is $k\in [n]$. Let $T, T'$ be the samples obtained by running $\cB$ on $S, S',$ respectively, with the same set of random coins in Step~\ref{step:unif-samp}. More precisely, let $R$ denote the random sampling procedure used in Step~\ref{step:unif-samp}, and define $T=R(S)$ and $T'=R(S')$. Let $r$ be the number of times the $k$-th point of the input dataset is sampled by $R$. Hence, $r=\lvert T\Delta T'\rvert$, i.e., $r$ is the number of points where $T$ and $T'$ differ. By Chernoff's bound, $r\leq 4\,\log(2/\delta)$ with probability $1-\delta/2$. Let $\V$ be any measurable subset of the range of $\cB$. Observe that
\begin{align*}
    \pr{\cB}{\cB(S)\in\V}&=\pr{\A, R}{\A(T)\in\V}\\
    &\leq \pr{\A, R}{\A(T)\in\V\vert ~r\leq 4\,\log(2/\delta)}\cdot\pr{}{~r\leq 4\,\log(2/\delta)} +\delta/2\\
    &\leq e^{\frac{r\,\eps}{4\,\log(2/\delta)}}\cdot \pr{\A, R}{\A(T')\in\V\vert~ r\leq 4\,\log(2/\delta)}\cdot\pr{}{~r\leq 4\,\log(2/\delta)} +\frac{\delta}{2}+r\,e^{\frac{r\,\eps}{4\,\log(2/\delta)}}\frac{e^{-\eps}\delta}{8\,\log(2/\delta)}\\
    &\leq e^{\eps}\cdot \pr{\A, R}{\A(T')\in\V} +\delta\\
    &=e^{\eps}\cdot\pr{\cB}{\cB(S')\in\V}+\delta,
\end{align*}
where the third inequality follows from the fact that $\A$ is $\left(\frac{\eps}{4\,\log(2/\delta)}, \frac{\delta}{2}\right)$-differentially private and group differential privacy (e.g.~\cite{dwork2013algorithmic}). This shows that $\cB$ is $(\eps, \delta)$-differentially private, proving the reduction, and hence, the lower bound. 
\fi

\end{document}